\documentclass[10pt, conference, letterpaper]{IEEEtran}
\IEEEoverridecommandlockouts
\usepackage{amsmath,amssymb,amsfonts}
\usepackage{algorithmic}
\usepackage{graphicx}
\usepackage{textcomp}
\usepackage{comment}
\usepackage{cite}

\usepackage[T1]{fontenc}
\usepackage[utf8]{inputenc}
\usepackage{tgtermes}
\usepackage{multirow}
\usepackage{stfloats}

\usepackage[ruled,vlined,linesnumbered]{algorithm2e}
\usepackage{subcaption}
\captionsetup{font=small}
\captionsetup[sub]{font=small}
\usepackage{float}
\usepackage{booktabs}

\usepackage{xfrac}
\allowdisplaybreaks

\usepackage{caption} 
\captionsetup[table]{skip=4pt}

\usepackage[letterspace=-5]{microtype}

\usepackage[shortlabels]{enumitem}

\setlength\emergencystretch{.5\textwidth}

\setlength\floatsep{1\baselineskip plus 1pt minus 2pt}
\setlength\textfloatsep{1\baselineskip plus 1pt minus 2pt}
\setlength\intextsep{1\baselineskip plus 1pt minus 2 pt}


\usepackage{array}
\newcommand{\PreserveBackslash}[1]{\let\temp=\\#1\let\\=\temp}
\newcolumntype{C}[1]{>{\PreserveBackslash\centering}p{#1}}
\newcolumntype{R}[1]{>{\PreserveBackslash\raggedleft}p{#1}}
\newcolumntype{L}[1]{>{\PreserveBackslash\raggedright}p{#1}}

\usepackage{amsthm}

\newtheorem{assumption}{Assumption}
\newtheorem{theorem}{Theorem}
\newtheorem{corollary}{Corollary}

\usepackage[dvipsnames]{xcolor}

\newcommand{\Identity}{{\rm I\kern-.2em l}}
\newcommand{\Expect}{\mathbb{E}}
\newcommand{\Expectbracket}[1]{\mathbb{E}\left[ #1 \right]}
\newcommand{\Expectcond}[2]{\mathbb{E}\left[\left. #1 \right| #2 \right]}
\newcommand{\x}{\mathbf{x}}
\newcommand{\y}{\mathbf{y}}
\newcommand{\g}{\mathbf{g}}

\newcommand{\normsq}[1]{\left\Vert #1 \right\Vert^2}
\newcommand{\innerprod}[1]{\left\langle #1 \right\rangle}
\newcommand{\Pmax}{P_\text{max}}
\newcommand{\iid}{i.i.d.}

\def\BibTeX{{\rm B\kern-.05em{\sc i\kern-.025em b}\kern-.08em
    T\kern-.1667em\lower.7ex\hbox{E}\kern-.125emX}}
\begin{document}

\title{Communication-Efficient Device Scheduling for Federated Learning Using Stochastic Optimization 
}

\author{
\IEEEauthorblockN{Jake Perazzone\IEEEauthorrefmark{1}, Shiqiang Wang\IEEEauthorrefmark{2}, Mingyue Ji\IEEEauthorrefmark{3}, Kevin S. Chan\IEEEauthorrefmark{1}}
\IEEEauthorblockA{
\IEEEauthorrefmark{1}Army Research Laboratory, Adelphi, MD, USA. Email: \{jake.b.perazzone.civ; kevin.s.chan.civ\}@army.mil, \\
\IEEEauthorrefmark{2}IBM T. J. Watson Research Center, Yorktown Heights, NY, USA. Email: wangshiq@us.ibm.com\\
\IEEEauthorrefmark{3}\textls{Department of Electrical \& Computer Engineering, University of Utah, Salt Lake City, UT, USA. Email: mingyue.ji@utah.edu}
}
\thanks{This research was partly sponsored by the U.S. Army Research Laboratory and the U.K. Ministry of Defence under Agreement Number W911NF-16-3-0001. The views and conclusions contained in this document are those of the authors and should not be interpreted as representing the official policies, either expressed or implied, of the U.S. Army Research Laboratory, the U.S. Government, the U.K. Ministry of Defence or the U.K. Government. The U.S. and U.K. Governments are authorized to reproduce and distribute reprints for Government purposes notwithstanding any copyright notation hereon.}
\vspace{-0.15in}
}

\maketitle

\begin{abstract}
Federated learning (FL) is a useful tool in distributed machine learning that utilizes users' local datasets in a privacy-preserving manner. When deploying FL in a constrained wireless environment; however, training models in a time-efficient manner can be a challenging task due to intermittent connectivity of devices, heterogeneous connection quality, and non-i.i.d. data. In this paper, we provide a novel convergence analysis of non-convex loss functions using FL on both i.i.d. and non-i.i.d. datasets with arbitrary device selection probabilities for each round. Then, using the derived convergence bound, we use stochastic optimization to develop a new client selection and power allocation algorithm that minimizes a function of the convergence bound and the average communication time under a transmit power constraint. We find an analytical solution to the minimization problem. One key feature of the algorithm is that knowledge of the channel statistics is not required and only the instantaneous channel state information needs to be known. Using the FEMNIST and CIFAR-10 datasets, we show through simulations that the communication time can be significantly decreased using our algorithm, compared to uniformly random participation.
%
\end{abstract}


\section{Introduction}
Federated learning (FL) is a valuable machine learning (ML) tool that enables distributed training of neural network models without centralized data by utilizing computation at several distributed learners who use their own local datasets.
Model training is accomplished through a collaborative procedure in which the participating learners are sent the current model and then they each separately perform updates via stochastic gradient descent (SGD) using their own locally collected datasets.
After a set number of local iterations, the participants send their updated model weights to an aggregator who updates the global model, typically through simple averaging of each participant's update, as in \emph{FedAvg}~\cite{mcmahan2017communication}.
The process then repeats by sending out the updated global model to all learners participating in the next round and continues until a satisfactory model is obtained.
A block diagram of the uplink in a wireless network running FL can be found in Figure \ref{fig:blockDiagram} where each learner $n$ is a device that has its own independent channel to the aggregator with fading parameter $h_n(t)$.

\begin{figure}
    \centering
    \includegraphics[width=.7\linewidth]{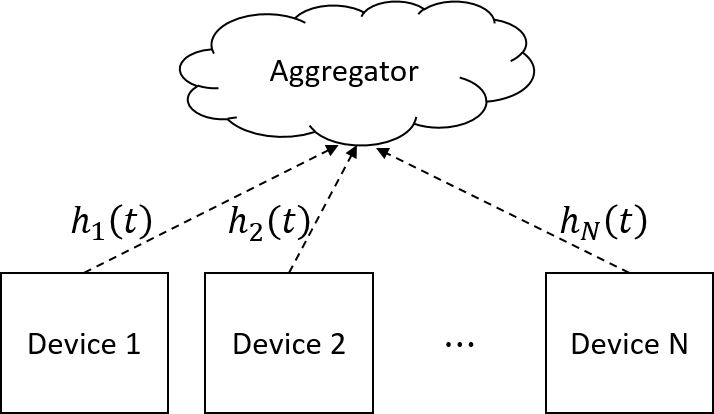}
    \caption{Block diagram of the uplink communication in federated learning over a wireless network.}
    \label{fig:blockDiagram}
\end{figure}

One of the major advantages of the FL training process is that user privacy is preserved since the users' data never leaves their device.
This allows the end user to take part in training and ultimately obtain better ML models without fear of revealing their private data.
The orchestration of FL over large-scale wireless networks, though, has proved to be a challenging task since the amount of communication required to converge to an acceptable model creates a large bottleneck in the process. 
This is particularly evident in dynamic mobile edge computing (MEC) environments where poor channel quality and intermittent connectivity can completely derail training.
For example, if a device loses connection to the server and does not participate in training for an extended period of time, the global model will begin to shift away from their locally optimal model which will negatively affect convergence until they rejoin.
If many devices are absent for extended periods of time, the global model will converge very slowly, or possibly not at all depending on the degree of heterogeneity of the data.
Additionally, if a device is available but has a very bad connection, resources will be wasted if they participate in every round.
Thus, device selection becomes a very important aspect in the management of FL in practice.

In the original FL algorithm, \emph{FedAvg}~\cite{mcmahan2017communication}, clients are selected uniformly at random in each round.
Although this strategy has been shown to converge \cite{li2019convergence,mitra2021achieving}, in practice, devices are not always available for selection due to factors such as energy and time constraints.
Additionally, since this selection policy is agnostic to channel conditions and other factors, it will lead to the consumption of more network resources than necessary.
Thus, a more intelligent approach to device selection is needed to optimize network resource consumption.
However, before designing such an approach, the effect of arbitrary device selection on FL convergence must be understood to ensure convergence to a good model.

In this paper, we derive a novel convergence bound for non-convex loss functions with arbitrary device selection probabilities for each FL round.
Our new upper bound shows that as long as all devices have a non-zero probability of participating in each round, then FL will converge in expectation to a stationary point of the loss function.
We then use the knowledge of how the selection probabilities affect this newly found convergence bound to formulate a stochastic optimization problem that determines the optimal selection probabilities and transmit powers.
The objective function of the problem minimizes a weighted sum of the convergence bound and the time spent for communicating model parameters with a constraint on the peak and time average transmit power.
Communicating the model parameters over many rounds is the major bottleneck of FL.
Therefore, minimizing the communication time is very beneficial in speeding up convergence and minimizing the burden on the network.
The form of the convergence bound and our novel problem formulation allow us to utilize the Lyapunov drift-plus-penalty framework to compute an analytical and distributed solution to the minimization problem with analytical expressions.
A key advantage of our new device selection algorithm is that it is able to make decisions according to current channel conditions without knowledge of the underlying channel statistics.

To show the performance of our algorithm, we run numerous experiments on the CIFAR-10 and FEMNIST datasets to demonstrate the saved communication time using our developed algorithm.
We compare our results to the uniform selection policy of \emph{FedAvg} and show that the time required to reach a target accuracy can be decreased by up to 58\%.
In summary, our main contributions are as follows:
\begin{enumerate}
    \item We derive a new upper bound for convergence of non-convex loss functions using FL with arbitrary selection probabilities.
    \item We formulate a novel stochastic optimization problem that minimizes a weighted sum of the newly found convergence bound and the amount of communication time spent on transmitting parameter updates, while satisfying transmit power constraints.
    \item Using the Lyapunov drift-plus-penalty framework, we derive an analytical and distributed solution to the problem that does not require knowledge of the channel statistics.
    \item We provide experimental results that demonstrate a communication savings of up to 58\% compared to traditional uniform selection strategies.
\end{enumerate}

The rest of the paper is organized as follows.
First, we present some related work in Section \ref{sec:relatedWorks} before formally presenting the FL problem in Section \ref{sec:probForm}.
Then, convergence analysis is provided in \ref{sec:convAnalysis} and the device scheduling policy is developed in Section \ref{sec:lyapunov}.
Finally, we present experimental results in Section~\ref{sec:experiments}.

\section{Related Works}\label{sec:relatedWorks}
Since its introduction in \cite{mcmahan2017communication}, FL has garnered a lot of attention in both industry and academia with a major focus on providing privacy guarantees \cite{mothukuri2021survey,geyer2017differentially,truex2019hybrid}, characterizing convergence \cite{li2019convergence,li2020federated,mitra2021achieving,wang2019adaptive}, and enhancing communication efficiency \cite{konevcny2016federated} through strategies such as model compression via sparsification \cite{han2020adaptive,sattler2019robust} and quantization \cite{alistarh2017qsgd,albasyoni2020optimal}.
One of the biggest challenges with implementing FL at scale is the heterogeneity that is present in both the system and the data.
An alternative way to address communication efficiency and combat system heterogeneity is through device scheduling, or client selection. 
Doing this naively, however, can lead to suboptimal models due to the skew introduced by the heterogeneous, or non-\iid, data at the devices which is why we design our selection process based on its effect on convergence.

One of the first works specifically targeting the problem is \cite{nishio2019client} where the FL training process is modeled in a MEC network with a wide variety of devices.
The scheduling procedure presented was designed to speed up convergence by having as many devices as possible to participate in each round within a desired time window.
The presented strategy, however, does not consider its effect on convergence and results in much poorer performance for non-\iid~datasets as indicated in their results. 
Some other empirical studies with similar approaches include \cite{ribero2020communication,goetz2019active}, but also do not consider or derive convergence bounds for their selection strategies.

Some later work began to include analysis of the convergence of FL with device selection.
Among these is \cite{yang2019scheduling}, but only the convergence of simple linear regression loss is considered. 
In \cite{cho2020client}, the authors analyze the convergence of strongly convex loss functions, but unfortunately their bound introduces a non-vanishing term and thus their strategy is not guaranteed to converge to a stationary point of the loss function. 
Both \cite{ren2020scheduling} and \cite{ruan2021towards} consider convergence, but only for strongly convex loss functions. 
Convergence results for non-convex loss functions with partial device participation have also been presented in \cite{li2020federated,karimireddy2020scaffold,yang2021achieving}, but they only consider the case where devices are chosen uniformly at random with or without replacement and do not allow for arbitrary selection probabilities.
Finally, \cite{gu2021fast} considers arbitrary probabilities for each device, but these probabilities are held constant throughout training and are not reflected in the parameter aggregation weights. 
Additionally, in \cite{gu2021fast}, all devices must participate in the first round for convergence.
We improve upon these results by considering non-convex loss functions and derive a bound with no non-vanishing term under the condition that all devices have an arbitrary non-zero probability of participating in each round.

Some works \cite{chen2020joint,yang2020energy} develop frameworks that jointly optimize convergence and communication over wireless networks.
Similarly to our approach, they derive a convergence bound and then minimize it by finding the optimal parameter values.
For example, in \cite{chen2020joint}, the FL loss is minimized while meeting the delay and energy consumption requirements via power allocation, user selection, and resource block allocation.
Both papers, however, make the unrealistic assumption that the channel remains constant throughout the training process which we do not assume here.

In \cite{wadu2020federated}, stochastic optimization is used to determine an optimal scheduling and resource block policy that simultaneously minimizes the FL loss function and CSI uncertainties.
The loss function considered, though, is simple linear regression and does not readily apply to neural network models.
Stochastic optimization is also considered for FL in \cite{huang2020efficiency} and \cite{zhou2020cefl}, but not to design an optimal device selection policy that guarantees convergence of non-convex loss functions like we do here. 

\section{Problem Formulation}\label{sec:probForm}
We now explain the FL problem in more detail.
Consider a system with $N$ clients, where each client $n$ has a possibly non-convex local objective $f_n(\x)$ with parameter $\x \in \mathbb{R}^d$.
We would like to solve the following finite-sum problem:
\begin{equation}\label{eqn:objFunction}
    \min_\x f(\x) := \frac{1}{N}\sum_{n=1}^N f_n(\x) .
\end{equation}

To solve \eqref{eqn:objFunction}, we follow the FL paradigm and perform a slightly modified version of \emph{FedAvg} \cite{mcmahan2017communication}, where instead of uniform sampling with/without replacement, each client has an arbitrary probability of being selected to participate in a given round, denoted as $q_n^t$ for device $n$ at round $t$.
This modification allows us to adjust the probability of selection for the participating devices based on system dynamics including channel conditions.
Additionally, analyzing \emph{FedAvg} with arbitrary probabilities allows us to observe the effect that a scheduling policy that controls these probabilities has on convergence in order to design a better policy.
The modified algorithm can be found in Algorithm~\ref{alg:fedavg}.

We let $\Identity_t^n \in \{0,1\}$ be a random variable to denote whether client $n$ is sampled in round $t$ such that $q^t_n := \Pr\{\Identity_t^n = 1 \}$ . 
Next, denote $I$ as the synchronization interval, or the number of local SGD updates performed by a device before aggregation, and denote $\g_n(\x)$ as the stochastic gradient of $f_n(\x)$.
We denote the learning rate as $\gamma >0$ and the number of total rounds as $T$. 
Note that this algorithm is logically equivalent to one where only the participating clients receive the model updates and compute the gradient updates.
Notice that each device's aggregation weight is inversely proportional to their probability of being selected.
This ensures that the gradient updates remain unbiased.
Intuitively, it ensures that devices with low participation can still have sufficient influence on the global model when they do participate.


\begin{algorithm}[h]
 \caption{FedAvg with client sampling}
 \label{alg:fedavg}
\KwIn{$\gamma$, $\x_0$, $I$, $T$, $\{q_n^t\}$}
\KwOut{$\{\x_t\}$}

\For{$t \leftarrow 0, \ldots, T-1$}{
    Sample $\Identity_n^t \sim q_n^t, \forall n$;
    
    \For{$n \leftarrow 1,\ldots,N$ in parallel}{
        $\y^n_{t,0} \leftarrow \x_t$;
        
        \For{$i \leftarrow 0, \ldots, I-1$}{
            $\y^n_{t,i+1} \leftarrow \y^n_{t,i} - \gamma \g_n(\y^n_{t,i})$;
        }
        
    }
    
    $\x_{t+1} \leftarrow \x_t + \frac{1}{N}\sum_{n=1}^N \frac{\Identity_n^t}{q_n^t} \left(\y^n_{t,I}-\y^n_{t,0}\right) $; \, \tcp{global parameter update}

}
\end{algorithm}

\section{Convergence Analysis}\label{sec:convAnalysis}
In this section, we prove an upper bound on the convergence of \eqref{eqn:objFunction} using Algorithm~\ref{alg:fedavg} for non-convex loss functions.
We will assume that $\Identity_n^t$ and $\Identity_{n'}^t$ are independent for $n \neq n'$ and that the randomness in client sampling is independent of SGD noise, so that $\Identity_n^t$ and $\g_n$ are independent.
We then make the following assumptions on the local loss functions, which are common in the convergence analysis literature.
\begin{assumption}[$L$-smoothness]\label{asmp:Lsmooth}
    \begin{align}
        \Vert \nabla f_n(\y_1) - \nabla f_n(\y_2) \Vert &\leq L \Vert \y_1 - \y_2 \Vert
    \end{align}
    for any $y_1$, $y_2$ and some $L>0$.
\end{assumption}
\begin{assumption}[Unbiased stochastic gradients]\label{asmp:unbiased}
    \begin{align}
        \Expectcond{\g_n(\y)}{\y}&=\nabla f_n(\y) .
    \end{align}
    for any $\y$
\end{assumption}

Now, we state our novel convergence theorem in Theorem~\ref{thm:1}.
\begin{theorem}\label{thm:1}
Let Assumptions \ref{asmp:Lsmooth} and \ref{asmp:unbiased} hold with $\gamma$, $T$, $I$, $N$, and $q_n^t$ defined as above. Then, Algorithm~\ref{alg:fedavg} satisfies
\begin{align}\label{eqn:thm1}
    \frac{1}{T}\sum_{t=0}^{T-1}&\Expectbracket{\normsq{\nabla f(\x_t)}} \nonumber\\
    &\leq \frac{2\left(f(\x_0) - f^*\right)}{\gamma TI}\nonumber\\
    & \quad + \frac{\gamma^2 L^2(I-1)}{TIN}\sum_{t=0}^{T-1}\sum_{n=1}^N\sum_{i=0}^{I-1}\sum_{j=0}^{i-1}\Expectbracket{\normsq{ \g_n(\y^n_{t,j})}} \nonumber\\
    &\quad + \frac{\gamma L}{TN}  \sum_{t=0}^{T-1}\sum_{n=1}^N \frac{1}{q_n^t} \sum_{i=0}^{I-1}  \Expectbracket{\normsq{  \g_n(\y^n_{t,i})}}
\end{align}
where $f^*$ represents the optimal solution to \eqref{eqn:objFunction}.
\end{theorem}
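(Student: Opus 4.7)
The plan is to follow the standard descent-lemma approach for non-convex analysis, adapted to handle both the $I$ local steps and the importance-weighted sampling. I would start from Assumption~\ref{asmp:Lsmooth} applied to the global iterate: defining the update $\Delta_t := \x_{t+1} - \x_t = -\frac{\gamma}{N}\sum_{n=1}^{N} \frac{\Identity_n^t}{q_n^t}\sum_{i=0}^{I-1}\g_n(\y_{t,i}^n)$, we get
\begin{equation*}
\Expectbracket{f(\x_{t+1})} - \Expectbracket{f(\x_t)} \le \Expectbracket{\innerprod{\nabla f(\x_t),\Delta_t}} + \tfrac{L}{2}\Expectbracket{\normsq{\Delta_t}}.
\end{equation*}
The goal is to extract $-\frac{\gamma I}{2}\normsq{\nabla f(\x_t)}$ from the inner-product term, absorb all remaining contributions into the two error terms of \eqref{eqn:thm1}, telescope over $t=0,\dots,T-1$, and divide by $\gamma I T/2$.

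For the inner-product term, since $\Identity_n^t$ is independent of $\g_n$ with $\Expect[\Identity_n^t/q_n^t]=1$, and stochastic gradients are unbiased (Assumption~\ref{asmp:unbiased}), the tower property gives $\Expect[\innerprod{\nabla f(\x_t),\Delta_t}] = -\gamma\sum_{i=0}^{I-1}\Expect[\innerprod{\nabla f(\x_t),\frac{1}{N}\sum_n \nabla f_n(\y_{t,i}^n)}]$. I would then apply the polarization identity $-\innerprod{a,b} = \tfrac{1}{2}(\normsq{a-b}-\normsq{a}-\normsq{b})$ with $a=\nabla f(\x_t)$ and $b=\frac{1}{N}\sum_n \nabla f_n(\y_{t,i}^n)$. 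This yields the desired $-\frac{\gamma I}{2}\normsq{\nabla f(\x_t)}$ on the LHS, a non-positive $-\frac{\gamma}{2}\normsq{b}$ term that can be dropped, and a client-drift penalty $\frac{\gamma}{2}\normsq{a-b}$. Using Jensen's inequality and Assumption~\ref{asmp:Lsmooth}, the drift satisfies $\normsq{a-b}\le \frac{L^2}{N}\sum_n \normsq{\y_{t,i}^n-\x_t}$, and since $\y_{t,i}^n - \x_t = -\gamma\sum_{j=0}^{i-1}\g_n(\y_{t,j}^n)$, a second application of Jensen gives $\Expect[\normsq{\y_{t,i}^n-\x_t}] \le \gamma^2 i \sum_{j=0}^{i-1}\Expect[\normsq{\g_n(\y_{t,j}^n)}]$. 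Bounding $i\le I-1$ inside the sum and telescoping produces exactly the second term of \eqref{eqn:thm1}.

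For the squared-norm term $\Expect[\normsq{\Delta_t}]$, the key is extracting the $1/q_n^t$ factor cleanly. I would use Cauchy–Schwarz (Jensen on the sum over $n$) to obtain $\normsq{\Delta_t}\le \frac{\gamma^2}{N}\sum_n (\Identity_n^t/q_n^t)^2 \normsq{\sum_i \g_n(\y_{t,i}^n)}$, then use $(\Identity_n^t)^2=\Identity_n^t$ (since $\Identity_n^t\in\{0,1\}$) together with independence between $\Identity_n^t$ and $\g_n$ to get $\Expect[(\Identity_n^t/q_n^t)^2\cdot(\cdot)]=\frac{1}{q_n^t}\Expect[(\cdot)]$. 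A final Jensen step bounds $\normsq{\sum_i \g_n(\y_{t,i}^n)}\le I\sum_i\normsq{\g_n(\y_{t,i}^n)}$, leaving $\frac{L}{2}\Expect[\normsq{\Delta_t}]\le \frac{L\gamma^2 I}{2N}\sum_n \frac{1}{q_n^t}\sum_i \Expect[\normsq{\g_n(\y_{t,i}^n)}]$, which after telescoping and dividing through produces the third term of \eqref{eqn:thm1}. The first term $\frac{2(f(\x_0)-f^*)}{\gamma T I}$ then drops out of the telescoped LHS together with the trivial bound $\Expect[f(\x_T)]\ge f^*$.

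The main obstacle I anticipate is in the squared-norm step: managing the cross terms $\Expect[\innerprod{\Identity_n^t\g_n/q_n^t,\Identity_{n'}^t\g_{n'}/q_{n'}^t}]$ without imposing a bounded-variance assumption. Using Cauchy–Schwarz before taking expectation sidesteps this by producing only diagonal terms, at the mild cost of an extra factor of $N$ that is exactly cancelled by the $1/N^2$ in $\normsq{\Delta_t}$. The remaining work is bookkeeping: ensuring the prefactors $\gamma L/(TN)$ and $\gamma^2 L^2(I-1)/(TIN)$ emerge correctly after dividing by $\gamma I T/2$, and verifying that the negative $\normsq{b}$ term from the polarization identity can be discarded without breaking the bound.
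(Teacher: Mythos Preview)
Your plan is correct and matches the paper's own proof essentially step for step: the descent lemma, the tower/unbiasedness reduction of the cross term, the Jensen bound $\normsq{\sum_n\cdot}\le N\sum_n\normsq{\cdot}$ together with $(\Identity_n^t)^2=\Identity_n^t$ to extract the $1/q_n^t$ factor, and the drift bound via $L$-smoothness plus $\normsq{\sum_{j<i}\cdot}\le i\sum_{j<i}\normsq{\cdot}\le (I-1)\sum_{j<i}\normsq{\cdot}$. The only cosmetic difference is that the paper handles the inner product by writing $-\innerprod{a,b}=\innerprod{a,a-b}-\normsq{a}$ and applying Young's inequality, whereas you use the polarization identity and discard the extra $-\tfrac{\gamma}{2}\normsq{b}$; both yield the same upper bound.
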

\begin{proof}
The proof can be found in Appendix \ref{sec:appendix}.
\end{proof}
Furthermore, by making an assumption of uniformly bounded stochastic gradients, we can simplify the bound.
\begin{assumption}[Bounded stochastic gradients]\label{asmp:boundedGrad}
    \begin{align}
        \Expectbracket{\normsq{\g_n(\y)}} \leq G^2, \forall \y,n
    \end{align}
    for some $G > 0$.
\end{assumption}

\begin{corollary}\label{cor:bound}
If Assumption \ref{asmp:boundedGrad} holds, then the bound \eqref{eqn:thm1} becomes
\begin{align}\label{eqn:convBound}
    \frac{1}{T}\sum_{t=0}^{T-1}\Expectbracket{\normsq{\nabla f(\x_t)}} 
    &\leq \frac{2\left(f(\x_0) - f^*\right)}{\gamma TI}+ \gamma^2 L^2(I-1)^2G^2 \nonumber \\
    & \quad + \frac{\gamma LIG^2}{TN}  \sum_{t=0}^{T-1}\sum_{n=1}^N \frac{1}{q_n^t}.
\end{align}
\end{corollary}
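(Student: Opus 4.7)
The plan is to obtain the corollary by a direct substitution of Assumption~\ref{asmp:boundedGrad} into the bound of Theorem~\ref{thm:1}: every occurrence of $\Expectbracket{\normsq{\g_n(\y^n_{t,i})}}$ and $\Expectbracket{\normsq{\g_n(\y^n_{t,j})}}$ on the right-hand side of \eqref{eqn:thm1} is replaced by $G^2$, after which the three terms can be evaluated or upper-bounded by elementary counting.

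First, the $\frac{2(f(\x_0)-f^*)}{\gamma T I}$ term is carried over verbatim since it does not involve gradients. Second, for the $\gamma^2 L^2 (I-1)$ term I would pull $G^2$ out of the quadruple sum and then simplify the index counts: the outer sums give $\sum_{t=0}^{T-1}\sum_{n=1}^{N} 1 = T N$, while the inner double sum satisfies
\[
\sum_{i=0}^{I-1}\sum_{j=0}^{i-1} 1 \;=\; \frac{I(I-1)}{2} \;\leq\; I(I-1),
\]
where the looser bound matches the factor needed in \eqref{eqn:convBound}. Multiplying through and cancelling $T$, $N$, and one factor of $I$ against the prefactor $\frac{\gamma^2 L^2(I-1)}{T I N}$ yields exactly $\gamma^2 L^2 (I-1)^2 G^2$.

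Third, for the $\frac{\gamma L}{TN}$ term I would again pull $G^2$ out and note that the inner sum $\sum_{i=0}^{I-1} 1 = I$ is independent of $t$ and $n$, so it factors out of the $(t,n)$ double sum and leaves $\frac{\gamma L I G^2}{T N}\sum_{t=0}^{T-1}\sum_{n=1}^{N} \frac{1}{q_n^t}$, matching the last term in \eqref{eqn:convBound}.

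The argument is essentially a bookkeeping exercise rather than a technical proof, so there is no real obstacle; the only mildly non-trivial step is recognising that the exact arithmetic sum $I(I-1)/2$ is further relaxed to $I(I-1)$ in order to produce the clean $(I-1)^2$ coefficient stated in the corollary.
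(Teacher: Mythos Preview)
Your proposal is correct and matches the paper's approach exactly: the paper's own proof is the single sentence ``The proof is a simple application of Assumption~\ref{asmp:boundedGrad} and Theorem~\ref{thm:1},'' and your substitution-and-counting argument is precisely that application spelled out. Your observation that the exact inner sum $\tfrac{I(I-1)}{2}$ must be relaxed to $I(I-1)$ to recover the stated $(I-1)^2$ coefficient is accurate and is the only non-mechanical step, though the paper does not comment on it.
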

\begin{proof}
The proof is a simple application of Assumption \ref{asmp:boundedGrad} and Theorem \ref{thm:1}.
\end{proof}
If we set $\gamma=\frac{1}{\sqrt{T}}$, 
we can guarantee a convergence rate of $\mathcal{O}\left(\frac{1}{\sqrt{T}}\right)$ to a stationary point of $f(\x)$. 
The third term in the bound shows the effect of arbitrary client sampling and follows with the intuition that the more often devices participate, the less iterations will be required to converge.
The bound can be minimized by choosing a selection strategy that minimizes the time average $\frac{1}{TN} \sum_{t=0}^{T-1}\sum_{n=1}^N \frac{1}{q_n^t}$.
While it has a trivial minimum at $q_n^t=1$ for all $n$ and $t$, i.e., full participation, it is impractical to assume that every device can or will participate in every round due to lack of network resources and the amount of time it would take to receive updates from every device.
Now with a convergence bound that is a function of device selection probability $q_n^t$, we can design an optimization problem that properly considers the effect that $q_n^t$ has on convergence in addition to minimizing communication overhead.

\section{Communication-Efficient Scheduling Policy}\label{sec:lyapunov}
In this section, we formulate a novel stochastic optimization problem that chooses the selection probabilities $q_n^t$ and transmit powers $P_n(t)$ in each round, to minimize a function of the convergence bound and communication overhead.
More specifically, we minimize a weighted sum of the last term in \eqref{eqn:convBound} and the average time spent communicating over the channel while satisfying transmission power constraints. 
Since communicating the model parameters over many rounds is the major bottleneck of FL, minimizing the communication time is very beneficial in speeding up convergence time and minimizing the burden on the network.
Our formulation allows for the application of the Lyapunov drift-plus-penalty framework which leads to an analytical solution that does not require knowledge of the exact dynamics or statistics of the channel; only the instantaneous channel state information (CSI) is needed.
The solution can also be computed in a distributed fashion in which each device can determine its own selection probability, and since selection is done independently, each device can notify the aggregator when it should be selected.

We consider a simple wireless network model where all devices are able to communicate with the aggregator and must take turns in transmitting their parameters as in time-division multiple access (TDMA).
For simplicity, we only consider the uplink channel, since the downlink is a broadcast by the aggregator to all the devices that takes much less time.
At each round $t$, the devices receive information about their current CSI in the form of channel gain $|h_n(t)|^2$ and noise power $N_0$.
The algorithm then uses this information to determine each device's probability of selection $q_n^t$ and transmission power $P_n^t$ for that round.
Additionally, the transmission power is subject to both a peak power constraint $P_\text{max}$ and time average constraint $\bar{P}_n$.

We formulate the problem as
\begin{align}\label{eqn:minProb}
    \min_{\{q_n^t\},\{P_n(t)\}} \quad & \lim_{T\rightarrow\infty} \frac{1}{T} \sum_{t=0}^{T-1} \Expectbracket{y_0(t)}\\
    \text{s.t.} \quad & \lim_{T\rightarrow\infty} \frac{1}{T} \sum_{t=0}^{T-1} \Expectbracket{P_n(t) q_n^t} \leq \bar{P}_n, \ \forall n=1,\ldots,N \nonumber\\
    & 0 \leq P_n(t) \leq P_\text{max}, \ n=1,\ldots,N \nonumber\\
    & q_n^t \in (0,1] \nonumber
\end{align}
where 
\begin{align}
    y_0(t) :=  \sum_{n=1}^{N} \left(\frac{1}{N q_n^t} + \lambda\!\cdot\! \frac{\ell\, q_n^t}{B\log_2\left( 1\!+\!|h_n(t)|^2 \frac{P_n(t)}{N_0} \right)}\right),
\end{align}
$\ell$ is the number of bits required to represent the model, $B$ is the bandwidth of the communication channel, $\log_2(\cdot)$ denotes the base 2 logarithm, and $\lambda > 0$ is a tuneable parameter that controls the trade-off between minimizing the convergence bound and the sum transmission time.
The first term in the objective is straightforwardly taken from \eqref{eqn:convBound} while the second term represents the minimum expected amount of time it takes to transmit the model given $q_n^t$.
The denominator of the second term is the channel capacity and although the communication rate in practice is not truly equal to the capacity, it gives us a communication time lower bound that indicates how channel gain and transmission power affect communication times.

The novelty of our convergence bound and formulation comes from the fact that both the effect of $q_n^t$ on convergence and the additional term we add to minimize communication time are in the form of a time average.
This allows us to apply standard theorems from the Lyapunov stochastic optimization framework \cite{neely2010stochastic} to reformulate \eqref{eqn:minProb} into a form that we solve analytically.
While the framework specializes in stabilizing queues in stochastic networks and we have no such queues here, the framework allows us to convert our transmission power constraint into a set of \emph{virtual} queues and apply the Lyapunov convergence theorem to our problem.
The practical implications of the virtual queues will be explored at the end of this section and its effect will be further illustrated in Section \ref{sec:experiments}.
So, to put our optimization problem into the Lyapunov drift-plus-penalty framework and using standard notation, we turn the constraint into a virtual queue $Z_n(t)$ for each client $n$ such that
\begin{align}\label{eqn:virtualQUpdate}
    Z_n(t+1) = \max [Z_n(t)+y_n(t),0],
\end{align}
where \vspace{-0.1in}
\begin{align}
    y_n(t) = P_n(t) q_n^t - \bar{P}_n .
\end{align}
Since we have no actual queues, the Lyapunov function is 
\begin{align}
    L(\mathbf{\Theta}(t)) := \frac{1}{2} \sum_{n=1}^N Z_n(t)^2
\end{align}
where $\mathbf{\Theta}(t)$ represents the current queue states, which in this case, is just $\{Z_n(t): \forall n \}$.
Next, we define the Lyapunov drift:
\begin{align}
    \Delta(t+1) = L(t+1) - L(t),
\end{align}
where we drop $\mathbf{\Theta}(t)$ for simplicity.
Finally, we have the Lyapunov drift-plus-penalty function that we wish to minimize:
\begin{align}\label{eq:driftpluspenalty}
    \Delta(t) + V \Expectbracket{y_0(t)|\mathbf{\Theta}(t)},
\end{align}
where $V>0$ is another arbitrarily chosen weight that controls the fundamental trade-off between queue stability and optimality of the objective functions.

Now, by utilizing Lemma 4.6 from \cite{neely2010stochastic} and assuming that the random event, i.e., channel gain $|h_n(t)|^2$, is \iid~with respect to $t$, we can upper bound \eqref{eq:driftpluspenalty}:
\begin{align}\label{eqn:driftPlusPenaltyBound}
    \Delta(t) + V \Expectbracket{y_0(t)|\mathbf{\Theta}(t)} &\leq C +  V \Expectbracket{y_0(t)|\mathbf{\Theta}(t)}\nonumber\\
        & \quad+ \sum_{n=1}^N Z_n(t)\Expectbracket{y_n(t)|\mathbf{\Theta}(t)}
\end{align}
where $C>0$ is a constant. 
Next, according to the Min Drift-Plus-Penalty Algorithm, we opportunistically minimize the expectation in the right hand side of \eqref{eqn:driftPlusPenaltyBound} at each time step $t$:
\begin{align}
    \min_{\{q_n^t\},\{P_n(t)\}}\,\,\,  & f(q_n^t,P_n(t)) := V y_0(t) + \sum_{n=1}^N Z_n(t)y_n(t) \label{eqn:minimization}\\
    \text{s.t.}\quad  & 0 \leq P_n(t) \leq P_\text{max}, \quad \forall n=1,\ldots,N \nonumber\\
    & q_n^t \in (0,1] \, . \nonumber
\end{align}

\begin{algorithm}[t]
 \caption{Stochastic client sampling}
 \label{alg:lyapunov}
\KwIn{$h_n(t)$, $N_0$, $\ell$, $B$, $V$, $\lambda$, $P_\text{max}$, $\bar{P}_n$}
\KwOut{$q_n^t$, $P_n(t)$}

$Z_n(0) \leftarrow 0$

$ P_n(0) \leftarrow \Pmax$

$ q_n^0 \leftarrow \min\left\{\max \left\{\sqrt{\frac{B\log_2\left(1+|h_n(t)|^2 \frac{\Pmax}{N_0}\right)}{N\lambda \ell}},0\right\},1\right\}$

\For{$t \leftarrow 1, \ldots, T-1$}{
    \For{$n \leftarrow 1,\ldots,N$ in parallel}{
        Calculate roots via \eqref{eqn:powerOpt} and \eqref{eqn:qOpt}

        \If {$0\leq P_n(t) \leq \Pmax$ and $q_n^t \in (0,1]$
        }{
        Perform Hessian determinant test to ensure minimum
        }
        \Else{
        
        $P_n(t) \leftarrow \Pmax$
        
        $q_n^t \leftarrow \min \{\eqref{eqn:qOpt},1\}$
        
        }
        
        $Z_n(t+1) \leftarrow \max [Z_n(t) + P_n(t) q_n^t - \bar{P}_n,0]$

    }
    
}
\end{algorithm}

Since the objective is an independent sum over $n$, we can perform the minimization separately for each device $n$.
Algorithm \ref{alg:lyapunov} details to process in determining the optimal $P_n(t)$ and $q_n^t$ in each round.
We now present Theorem \ref{thm:Lyapunov} which gives an analytical solution to \eqref{eqn:minimization} that can be computed distributively by the devices.

\begin{theorem}\label{thm:Lyapunov}
    The solution to \eqref{eqn:minimization} is given by Algorithm \ref{alg:lyapunov} where the optimal values for each $n$ is given by either the endpoints, i.e., $P_n^\textnormal{opt}(t)=P_\text{max}$, $q_n^t=1$ or by
    \begin{align} \label{eqn:powerOpt}
        P_n^\textnormal{opt}(t) = \frac{N_0}{|h_n(t)|^2} \left(\frac{A}{4} W_0\left(\sqrt{\frac{A}{4}}\right)^{-2}-1\right)
    \end{align}
    where $A=\frac{V\lambda\ell |h_n(t)|^2 \left(\log(2)\right)^2}{N_0 B Z_n(t)}$ and
    \begin{align} \label{eqn:qOpt}
        q_n^{t,\textnormal{opt}} &=\! \left(\frac{\lambda \ell N}{B\log_2\left(1+|h_n(t)|^2 \frac{P_n^\textnormal{opt}(t)}{N_0}\right)}+ \frac{N}{V} Z_n(t) P_n^\textnormal{opt}(t)\right)^{\!\!-\frac{1}{2}}\!\!\!\!\!\!,
    \end{align}
    where $W_0(\cdot)$ is the principal branch of the Lambert $W$ function.
\end{theorem}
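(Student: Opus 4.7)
My plan is to exploit the separability of the objective in \eqref{eqn:minimization} and reduce Theorem~\ref{thm:Lyapunov} to $N$ independent two-variable minimizations. Because $f(q_n^t,P_n(t))$ is a sum of per-device contributions and the constraints decouple across $n$ (the constant $-Z_n(t)\bar{P}_n$ can be dropped), the overall problem splits into minimizing, for each device $n$, $g_n(P,q) := \frac{V}{Nq} + \frac{V\lambda\ell\,q}{B\log_2(1+|h_n(t)|^2 P/N_0)} + Z_n(t)\,Pq$ over $(P,q)\in[0,P_\textnormal{max}]\times(0,1]$.

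For each $n$, I would first impose $\partial g_n/\partial q = 0$, which yields a quadratic in $q^{-1}$ whose unique positive root is precisely \eqref{eqn:qOpt}, giving $q_n^{t,\textnormal{opt}}$ as an explicit function of $P_n^\textnormal{opt}(t)$. Substituting this into $\partial g_n/\partial P = 0$ reduces the remaining optimality condition to an equation in $P$ alone. After the change of variables $u = 1+|h_n(t)|^2 P/N_0$ and rewriting $\log_2 u = \ln u/\ln 2$, the equation collapses to the form $u(\ln u)^2 = A'$ for a data-dependent constant $A'$ proportional to $V\lambda\ell|h_n(t)|^2/(B N_0 Z_n(t))$. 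Setting $v=\sqrt{u}$ converts this to $v\ln v = \sqrt{A'}/2$, whose unique positive solution is $v = \sqrt{A'}/(2\,W_0(\sqrt{A'}/2))$ by the defining identity $W_0(x)e^{W_0(x)}=x$ of the principal Lambert branch. Squaring $v$ and inverting the change of variables recovers \eqref{eqn:powerOpt}.

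To confirm that this interior stationary point is indeed a minimum, I would check that the $2\times2$ Hessian of $g_n$ is positive definite there. A direct computation gives $\partial^2 g_n/\partial q^2 = 2V/(Nq^3)>0$, while the mixed partial $\partial^2 g_n/\partial P\,\partial q = q^{-1}\partial g_n/\partial P$ vanishes at any interior critical point, reducing positive-definiteness to $\partial^2 g_n/\partial P^2>0$; the latter follows from the monotonicity of $R'(P)/R(P)^2$ in $P$, where $R(P):=B\log_2(1+|h_n(t)|^2 P/N_0)$. If the candidate lies outside the feasible box, smoothness forces the minimum onto the boundary; since $g_n\to\infty$ as either $q\to 0^+$ or $P\to 0^+$ (the latter because $\log_2 1=0$), the only binding boundaries are $P = P_\textnormal{max}$ together with a possible cap $q = 1$, yielding the fallback branch of Algorithm~\ref{alg:lyapunov} in which $P_n^\textnormal{opt}(t)=P_\textnormal{max}$ and $q_n^{t,\textnormal{opt}}$ is obtained from \eqref{eqn:qOpt} capped at $1$.

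I expect the main obstacle to be the transcendental power-optimality equation: getting the algebra right so that the Lambert-$W$ substitution produces exactly the form in \eqref{eqn:powerOpt} with the correct $\ln 2$ versus $\log_2$ factors in the constant $A$, and confirming that the principal branch $W_0$ is the correct choice (i.e., $\sqrt{A'}/2\ge 0$ so $W_0$ returns a unique real value, consistent with $u>1$ and $P\ge 0$). I also anticipate needing to treat the degenerate case $Z_n(t)=0$ separately, where the first-order condition in $P$ admits no interior root and $g_n$ becomes strictly decreasing in $P$, forcing $P_n^\textnormal{opt}(t) = P_\textnormal{max}$ and matching the initialization in Algorithm~\ref{alg:lyapunov}.
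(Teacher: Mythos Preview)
Your proposal is correct and follows essentially the same route as the paper: separate \eqref{eqn:minimization} across devices, locate the interior stationary point of each two-variable piece via first-order conditions, solve the resulting transcendental equation $u(\ln u)^2=A$ through a Lambert-$W$ substitution, and fall back to the boundary $P=P_{\max}$ (with $q$ capped at $1$) when the interior root is infeasible. One small simplification the paper exploits that you do not: since $\partial g_n/\partial P$ is $q$ times a function of $P$ alone, the $P$-optimality condition is already independent of $q$, so there is no need to first express $q$ via \eqref{eqn:qOpt} and substitute---you can solve for $P_n^{\textnormal{opt}}(t)$ directly and then read off $q_n^{t,\textnormal{opt}}$ from the $q$-equation. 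Your treatment is otherwise slightly more thorough than the paper's appendix, which leaves the Hessian test to Algorithm~\ref{alg:lyapunov} and does not discuss the $Z_n(t)=0$ degeneracy explicitly.
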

\begin{proof}
    The proof can be found in Appendix \ref{sec:appendix2}
\end{proof}

Theorem 4.8 in \cite{neely2010stochastic} and Theorem \ref{thm:Lyapunov} guarantee that this algorithm satisfies
\begin{align}\label{eqn:optimGap}
    \limsup_{t\rightarrow\infty} \frac{1}{t} \sum_{\tau=0}^{t-1} \Expectbracket{y_0(\tau)} \leq y_o^{\textnormal{opt}} + \frac{C}{V},
\end{align}
where $y_o^{\textnormal{opt}}$ is the minimum of $y_o$.
The theorems also guarantee that the transmit power constraint is satisfied as $t\rightarrow\infty$.
The user-defined parameter $V$ traditionally controls the trade-off between the average queue backlog and the gap from optimality, but since we do not have physical queues in our problem, the trade-off does not exist in the same way.
Instead, $V$ controls the speed of convergence in addition to the optimality gap in~\eqref{eqn:optimGap}.

In \eqref{eqn:qOpt}, we can see that when there is a large virtual queue $Z_n(t)$ or chosen transmit power, the probability of selection is decreased in order to satisfy the transmit power constraint.
In this way, the virtual queue represents how far from the time average constraint we are.
As $V$ is increased, the effect that the current virtual queue has on selection becomes less important and it takes longer to satisfy the average power constraint.
This is also explored experimentally in Section \ref{sec:effectV}.
A large $\lambda$ favors the minimization of communication time rather than the convergence bound which naturally leads to lower $q_n^t$ as seen in \eqref{eqn:qOpt}.
Finally, since the probability calculation is done independently by each device, it can be computed locally without direct orchestration by the aggregator.

\section{Experiments}\label{sec:experiments}
In order to demonstrate the advantages of our device scheduling algorithm, we evaluate it on the CIFAR-10 \cite{krizhevsky2009learning} and \mbox{FEMNIST} \cite{caldas2018leaf} datasets and compare the performance to uniform device sampling in terms of total time for communicating model parameters.
For simplicity, we assume that the computation time is much less than communication time and do not include that in our time measurements.
The FEMNIST dataset is a federated partitioning of the extended MNIST (EMNIST) dataset \cite{cohen2017emnist} that consists of 62 classes of handwritten letters and digits from 3597 different writers.
In the experiments, each device is given data from only one writer in order to simulate a more realistic heterogeneous data environment rather than partitioning by class as is sometimes done, e.g., in \cite{zhao2018federated}.
Therefore, for the FEMNIST dataset, we consider $N=3597$ clients in which we reserve 10\% of the data for testing.
For the CIFAR-10 dataset, on the other hand, we only consider the \iid~case where $N=100$ clients are given a uniform sampling from the 50,000 color images of 10 classes where 10,000 images are reserved for testing.

In both experiments, we train the same convolutional neural network (CNN) as in \cite{wang2019adaptive,han2020adaptive} which has $d=555,178$ parameters for CIFAR-10 and $d=444,062$ parameters for FEMNIST.
Therefore, for Algorithm \ref{alg:lyapunov}, we set $\ell=32d$ since each parameter is represented as a 32 bit floating point number.
We also set the minibatch size to 32, $\gamma$ = 0.01, $I=10$, and $B=22\times 10^{6}$ to simulate WiFi bandwidth.
The power constraints are set to $\bar{P}_n=1$ and $\Pmax=100$ and the noise power is normalized to $N_0=1$.
For the channel model, we assume each device experiences Rayleigh fading such that $|h_n(t)|$ is distributed as a Rayleigh random variable.
In the first set of experiments, we assume that every device has the same Rayleigh parameter, $\sigma=1$, but change to a more heterogeneous setup for the next group of experiments.
Note that our algorithm does not need to know either the parameters or the distribution of the channel gain itself.

In the uniform selection cases, we choose the number of devices to be selected in each round to match the average number of devices selected using our algorithm for different $\lambda$ values.
The average number of devices selected by our algorithm, denoted by $M$, is estimated using the Monte Carlo method. \textit{Note that the optimal number of selected devices by uniform selection is not known in practice; hence, we consider a stronger benchmark here than the commonly used uniform selection method.}
To satisfy the transmit power constraint in \eqref{eqn:minProb} for the uniform case, we set $P_n(t)=\bar{P}_n \cdot \frac{N}{M'}$ for all $n$ and $t$, where $M'$ is the number of devices selected in a given round that is equal to either $\lfloor M\rfloor$ or $\lceil M \rceil$.
We use a moving average with a window size of 500 iterations to smooth the curves for a better viewing experience.

To keep the communications channel realistic, we upper and lower bound the possible values for $|h_n(t)|^2$.
For the upper bound, we set $|h_n(t)|^2< (2^{10}-1)N_0/\bar{P}$ since, in practice, with a very good channel, modern communication system can only go up to 1024-QAM which is 10 bits/s/Hz.
For the lower bound, we set $|h_n(t)|^2< (2^{.25}-1)N_0/\Pmax$ to avoid big outliers that likely will not be chosen by either selection policy and only assume error correction is available at a rate of $.25$ bits/s/hz at the maximum transmit power.
Additionally, in cases where the value of $\lambda$ results in very low selection probabilities, we ensure that at least one device is selected each round by choosing the device with the largest $q_n^t$ if none are chosen during the regular selection process.

\subsection{CIFAR-10 Results}
First, we present our experimental results for the \iid~CIFAR-10 dataset.
The results of our experiments are shown in Figure \ref{fig:CIFAR10}.
In Figures \ref{fig:CIFAR10_IID_test} and \ref{fig:CIFAR10_IID_loss}, we consider a homogeneous network where each device has the same Rayleigh fading parameter, while in Figures \ref{fig:CIFAR10_nonIID_test} and \ref{fig:CIFAR10_nonIID_loss}, we consider a heterogeneous network where the fading parameter is different for each device.
More specifically, in the homogeneous case, we set the Rayleigh fading parameter such that all 100 devices have variance $\sigma=1$.
In the heterogeneous channel case, we set the Rayleigh fading parameter such that 10 devices have $\sigma=0.2$, 40 have $\sigma=0.75$, and 50 have $\sigma=1.2$.
In all four plots, we look at the cases where $\lambda=10$ and $\lambda= 100$ and compare them to uniform selection with $M= 5.99$ and $M=2.5$, respectively, for the homogeneous channel case, and $M= 5.65$ and $M=2.41$, respectively, for the heterogeneous channel case.
In the uniform case, fractional devices are chosen by choosing the floor or ceiling of $M$ with the appropriate probability.
We set $V=1000$ for our algorithm and justify this choice in Section \ref{sec:effectV}.

\begin{figure}
    \centering
    \begin{subfigure}[b]{0.475\linewidth}
        \centering
        \includegraphics[width=1.1\linewidth]{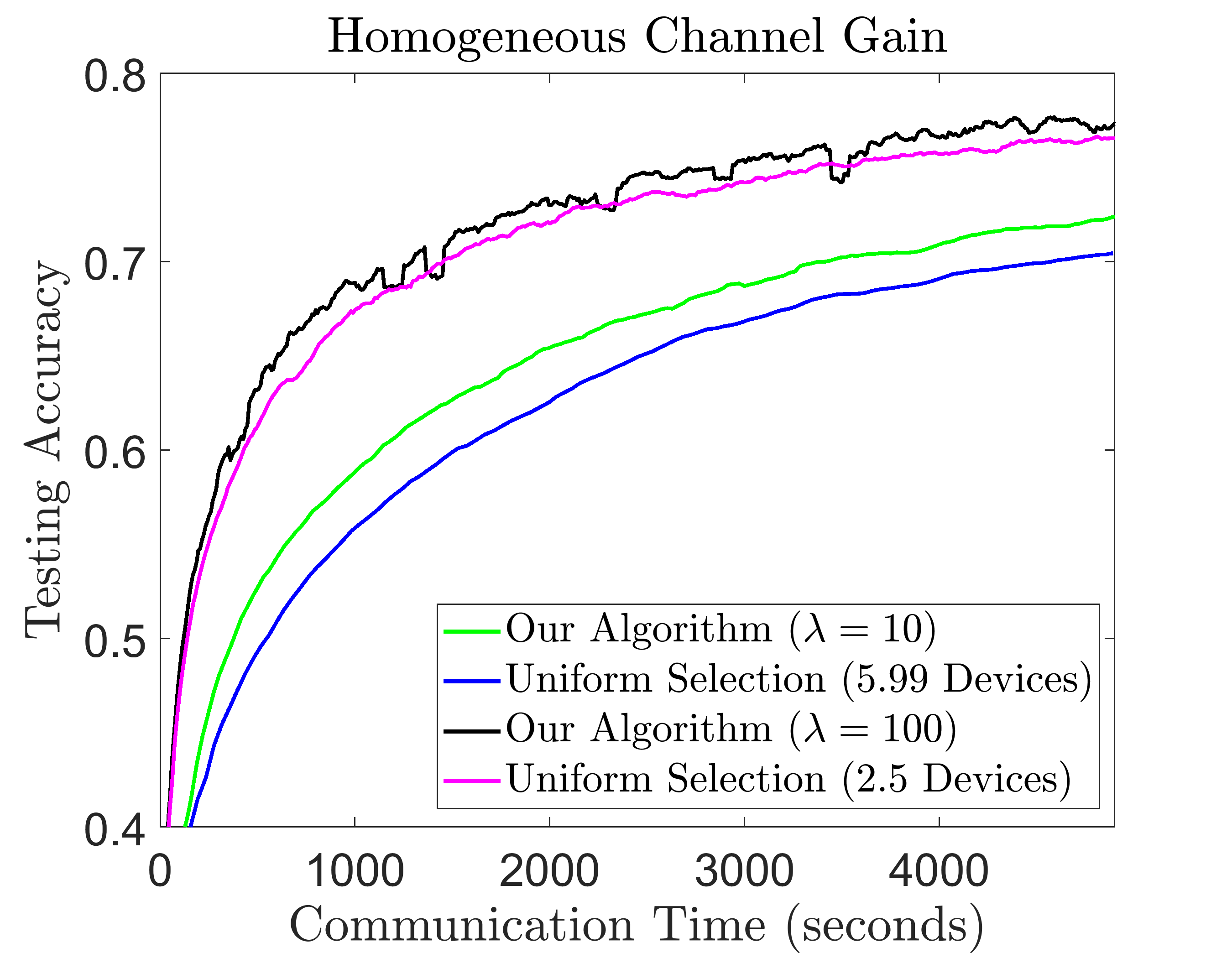}
        \caption{Testing accuracy over time.}
        \label{fig:CIFAR10_IID_test}
    \end{subfigure}
    \hfill
    \begin{subfigure}[b]{0.475\linewidth}
        \centering
        \includegraphics[width=1.1\linewidth]{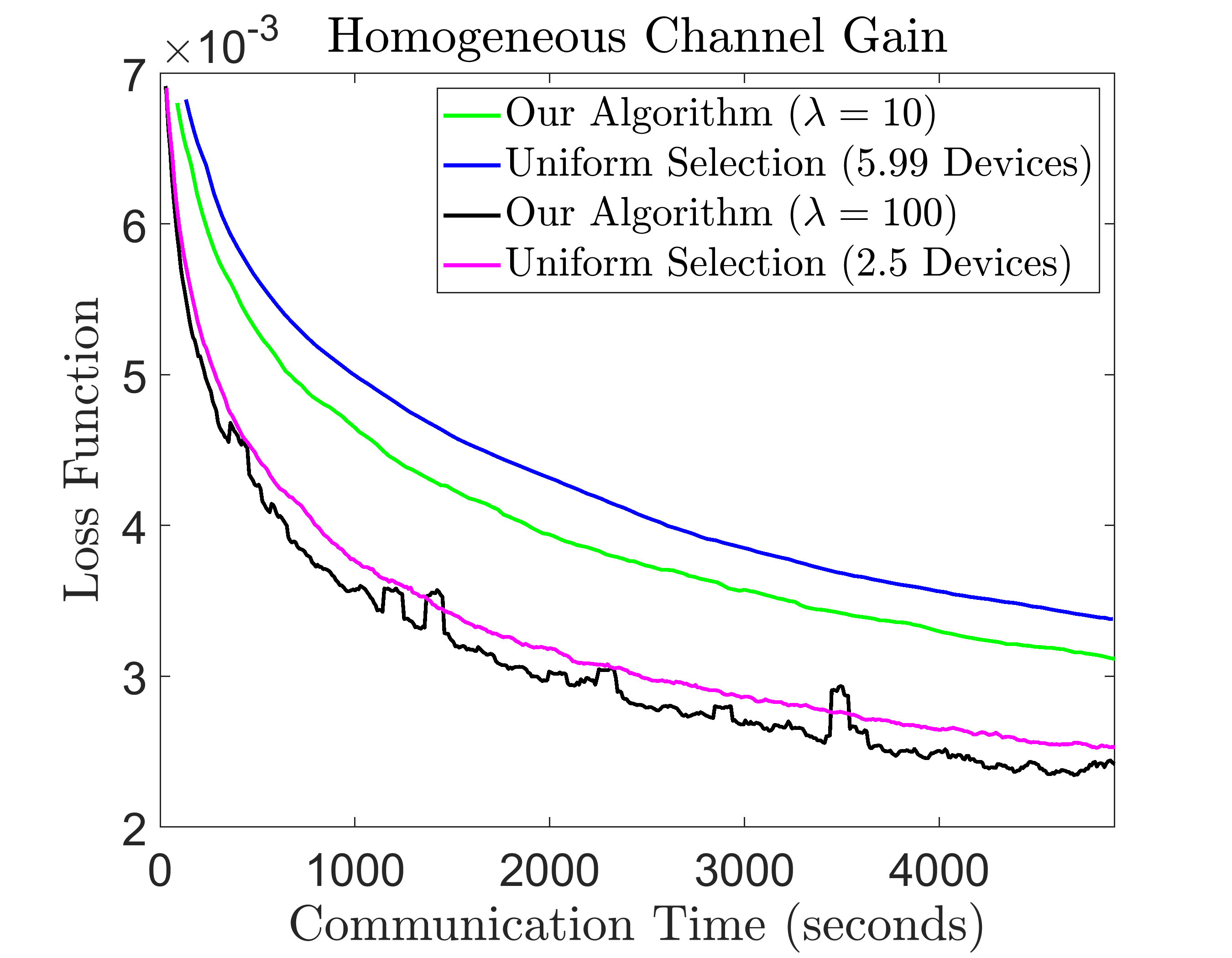}
        \caption{Loss function over time.}
        \label{fig:CIFAR10_IID_loss}
    \end{subfigure}
    \vspace{0.1in}
    
    \begin{subfigure}[b]{0.475\linewidth}
        \centering
        \includegraphics[width=1.1\linewidth]{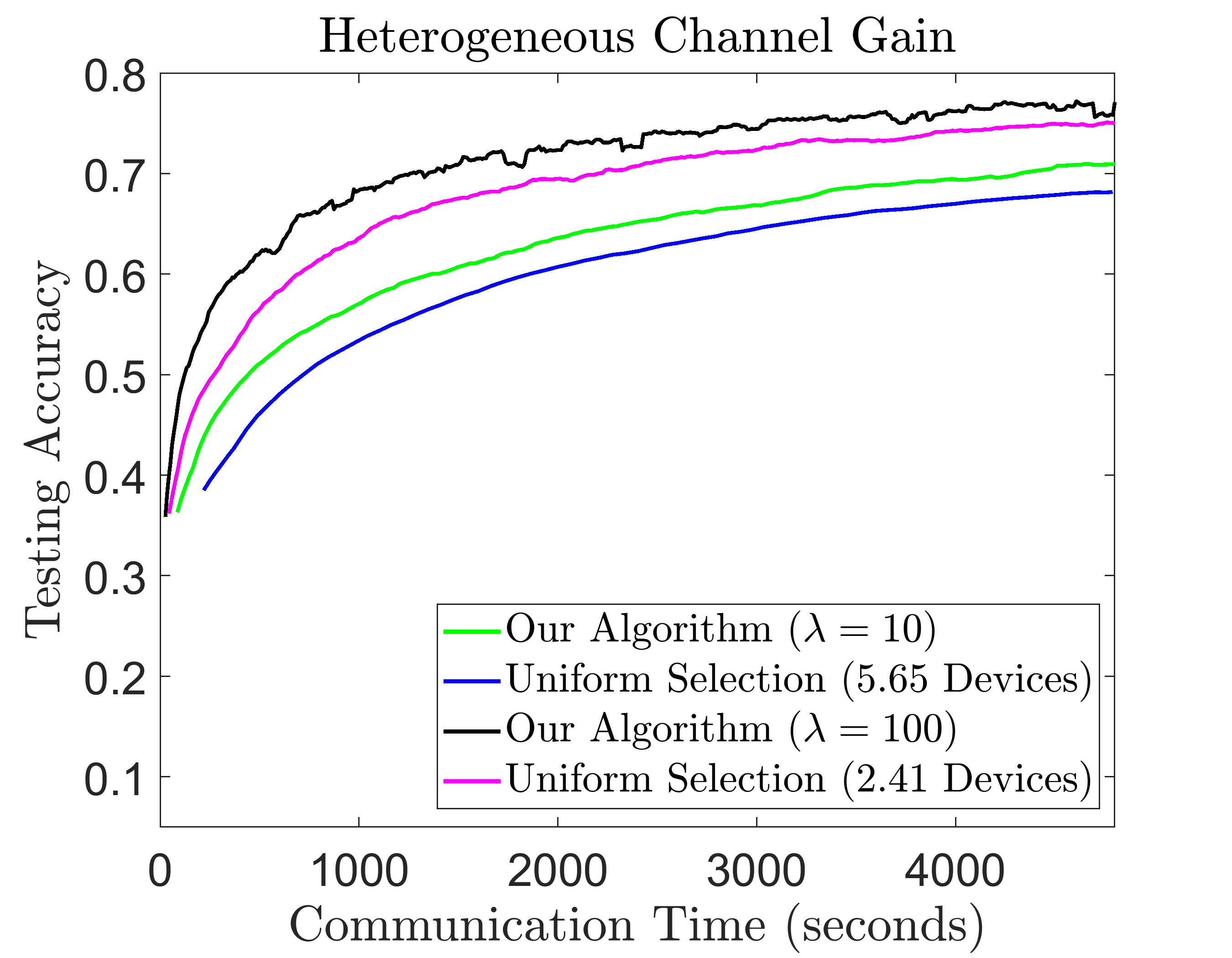}
        \caption{Testing accuracy over time.}
        \label{fig:CIFAR10_nonIID_test}
    \end{subfigure}
    \hfill
    \begin{subfigure}[b]{0.475\linewidth}
        \centering
        \includegraphics[width=1.1\linewidth]{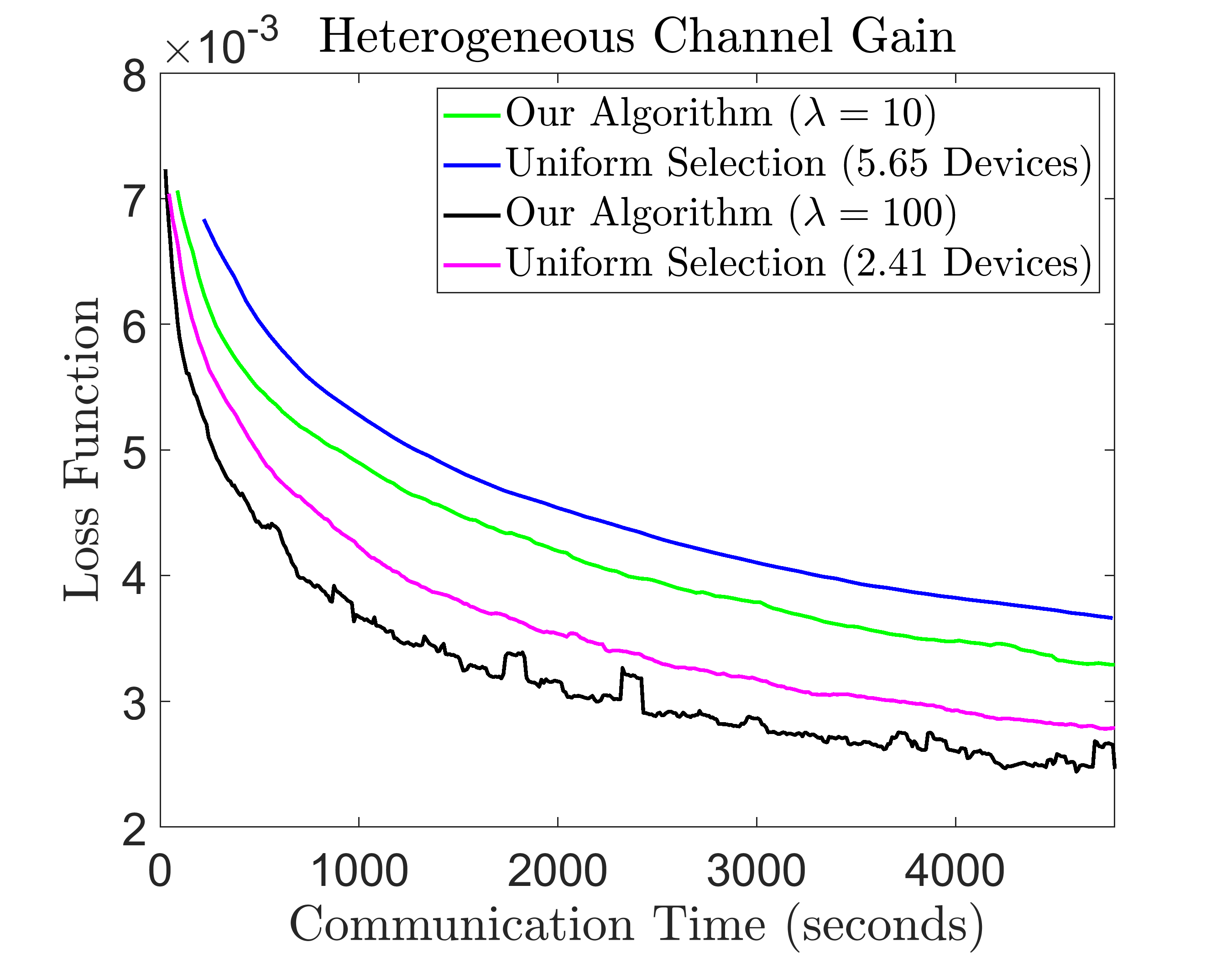}
        \caption{Loss function over time.}
        \label{fig:CIFAR10_nonIID_loss}
    \end{subfigure}
    \caption{Comparison of total communication time for uniform selection vs proposed algorithm on CIFAR-10 dataset.}
    \label{fig:CIFAR10}
\end{figure}

In this \iid~data case, the advantages of our scheme are readily apparent as our selection policy consistently reaches testing accuracy values in less time compared to the uniform equivalent.
The achieved training speed up is more noticeable in the heterogeneous channel case since the algorithm picks the devices with bad channels less often.
When comparing the Figures \ref{fig:CIFAR10_IID_test} and \ref{fig:CIFAR10_nonIID_test}, it is most clear in the $\lambda=100$ case.
For example, in the homogeneous channel case, our algorithm first reaches an accuracy of $0.7$ in $79.2\%$ less time whereas in the heterogeneous channel case, our algorithm first reaches an accuracy of $0.7$ in $58.2\%$ less time which is a larger speed up.

We also note that the reason that the selection schemes that choose fewer devices per round, e.g., the $\lambda=100$ and corresponding uniform cases, appear to converge to a higher accuracy faster is because they are able to complete more iterations in the given time frame.
So, while having fewer devices participate in a round generally results in a poorer quality update due to increased variance, it allows for the local models to be aggregated more quickly and thus can end up resulting in faster convergence in time.
In other words, quantity over quality wins out.
To illustrate the worse per round performance of the fewer device per round regimes, we plot the same results from Figure~\ref{fig:CIFAR10} in Figure~\ref{fig:effectOfLambda}, but versus communication rounds rather than communication time.
We reiterate that larger $\lambda$ means fewer devices chosen per round on average.
It is clear that as $\lambda$ is increased, the testing accuracy converges more slowly per round and oscillates more intensely.
This reveals an interesting unsolved trade-off between the quality versus the speed of global updates in federated learning.
While the scenario and datasets considered here favor faster, lower quality updates, this might not always be the case.
For example, the optimal update policy will depend on things like the communication/channel model used and the computation time required to compute updates.

\begin{figure}
    \centering
    \begin{subfigure}[b]{0.475\linewidth}
        \centering
        \includegraphics[width=1.1\linewidth]{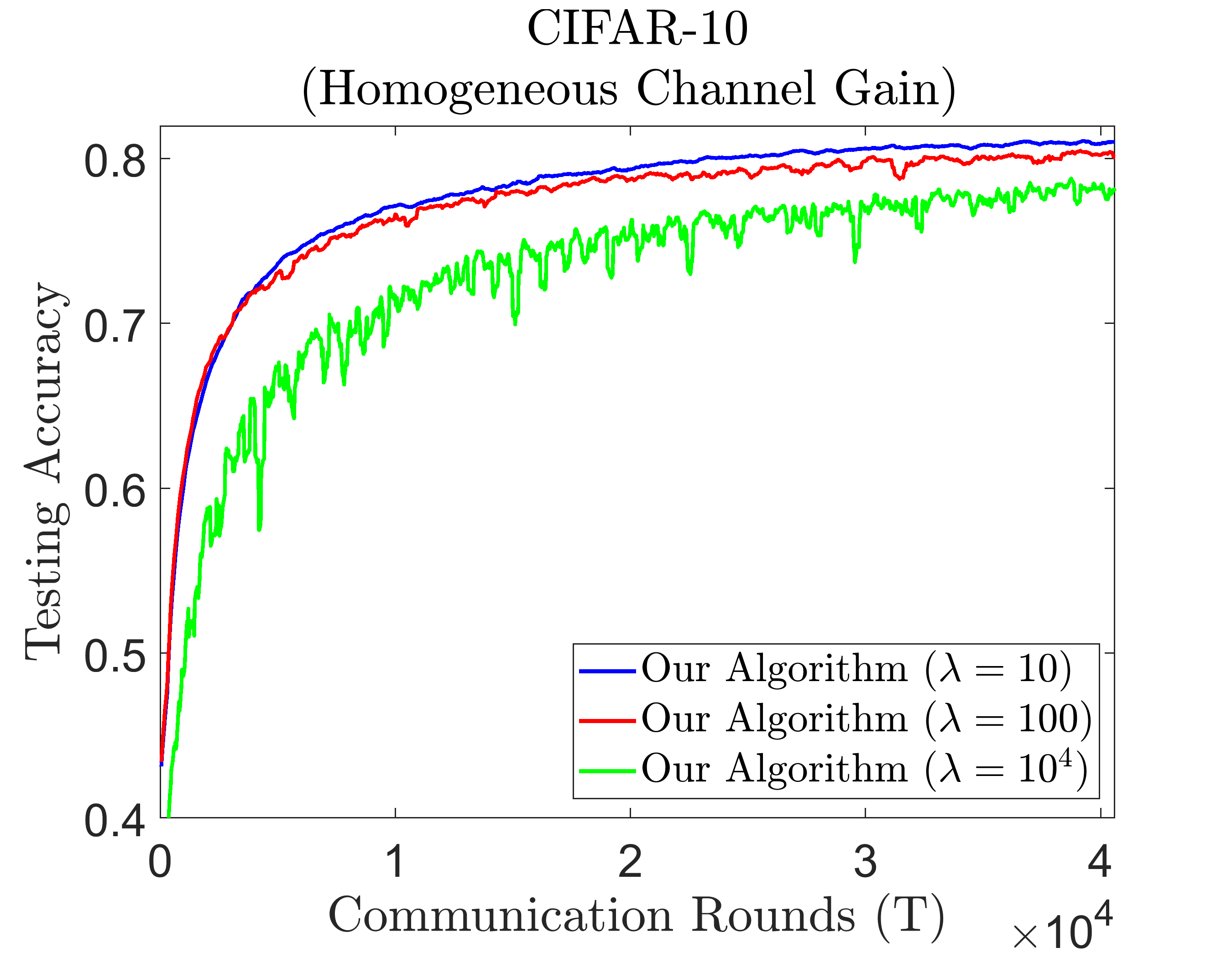}
        \caption{Testing accuracy over rounds.}
        \label{fig:CIFAR10_IID_test_ext}
    \end{subfigure}
    \hfill
    \begin{subfigure}[b]{0.475\linewidth}
        \centering
        \includegraphics[width=1.1\linewidth]{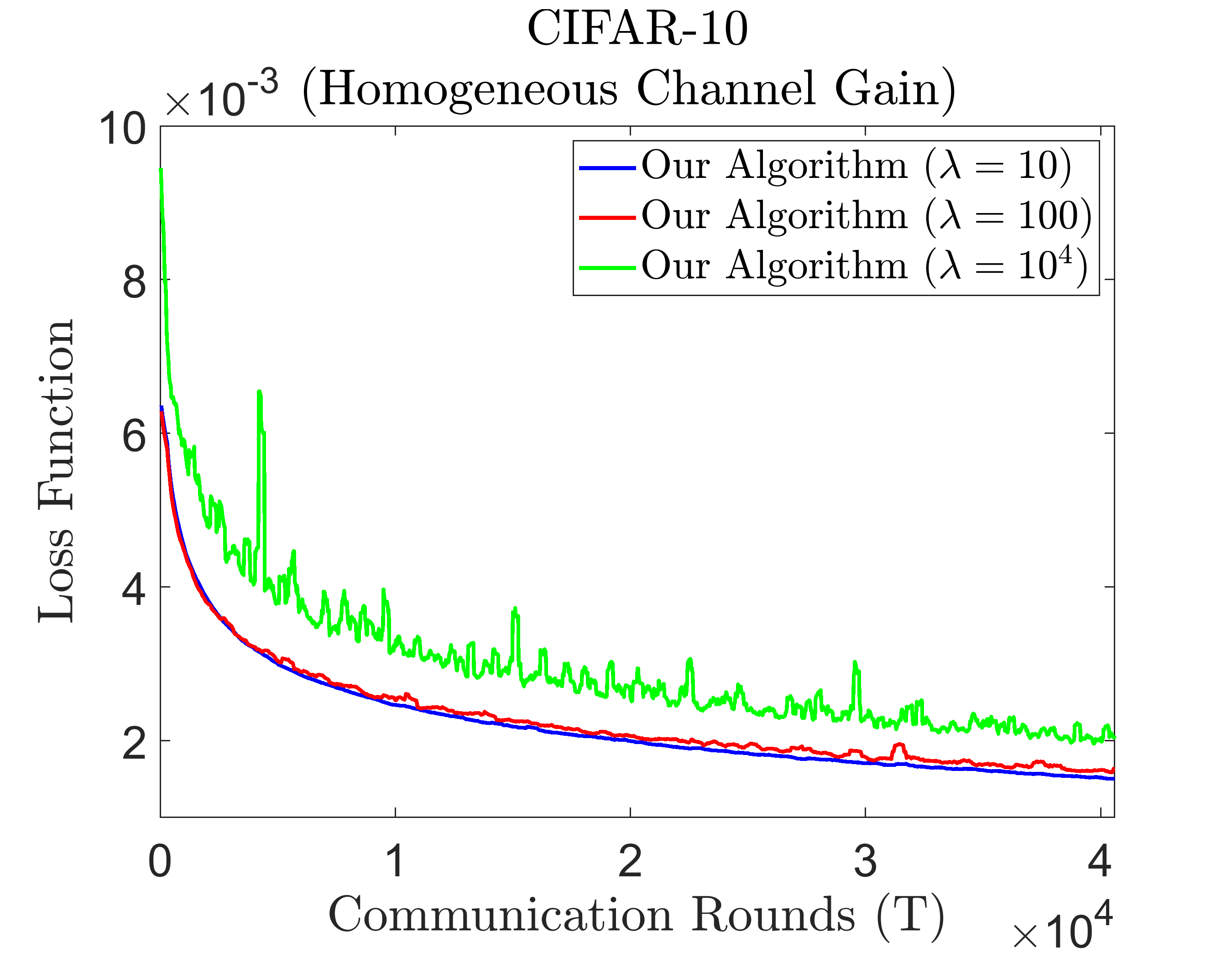}
        \caption{Loss function over rounds.}
        \label{fig:CIFAR10_IID_loss_ext}
    \end{subfigure}
    \caption{Effect of $\lambda$ (CIFAR-10).}
    \label{fig:effectOfLambda}
\end{figure}


\subsection{FEMNIST Results}
In our next experiment, we compare the total communication time for the FEMNIST dataset using uniform sampling versus our algorithm. We set the fading parameters such that for the heterogeneous case, 500 clients have $\sigma=0.2$, 1500 clients have $\sigma=0.75$, and 1597 clients have $\sigma=1.2$, while for the homogeneous case, we set $\sigma=1$ for all devices.
We again set $V=1000$ for our algorithm.
In uniform selection, we set $M= 54.36 $ and $M= 19.4$ devices for $\lambda=10$ and $\lambda=100$, respectively, for the homogeneous case, and we set $M= 52.7 $ and $M= 18.62 $ devices for $\lambda=10$ and $\lambda=100$, respectively, for the heterogeneous case.
The results are shown in Figure \ref{fig:FEMNIST}.

Interestingly, in the homogeneous channel case (Figures \ref{fig:FEMNIST_IID_test} and \ref{fig:FEMNIST_IID_loss}), the two selection strategies perform very similarly with a marginal increase in speed for our algorithm.
This is most likely due to the greater number of devices being chosen and the similarity in channel gain causing the algorithm to choose in such a way that is close to uniform selection.
For the heterogeneous channel gain case, on the other hand, the more varying channel gains causes the algorithm to choose the devices with better channels more frequently.
Since our algorithm guarantees that our algorithm converges even when training on non-\iid~data, the model still converges and benefits from the time saved using our device selection policy.
Another interesting note is that, in the heterogeneous case, the percentage of speed up is better for the $\lambda=10$ case than the $\lambda=100$ case.
For example, the testing accuracy reaches $0.8$ in $69.5\%$ less time in the $\lambda=10$ case compared to its uniform equivalent and $86.2\%$ less time in the $\lambda=100$ case compared to its uniform equivalent.

\begin{figure}
    \centering
    \begin{subfigure}[b]{0.475\linewidth}
        \centering
        \includegraphics[width=1.1\linewidth]{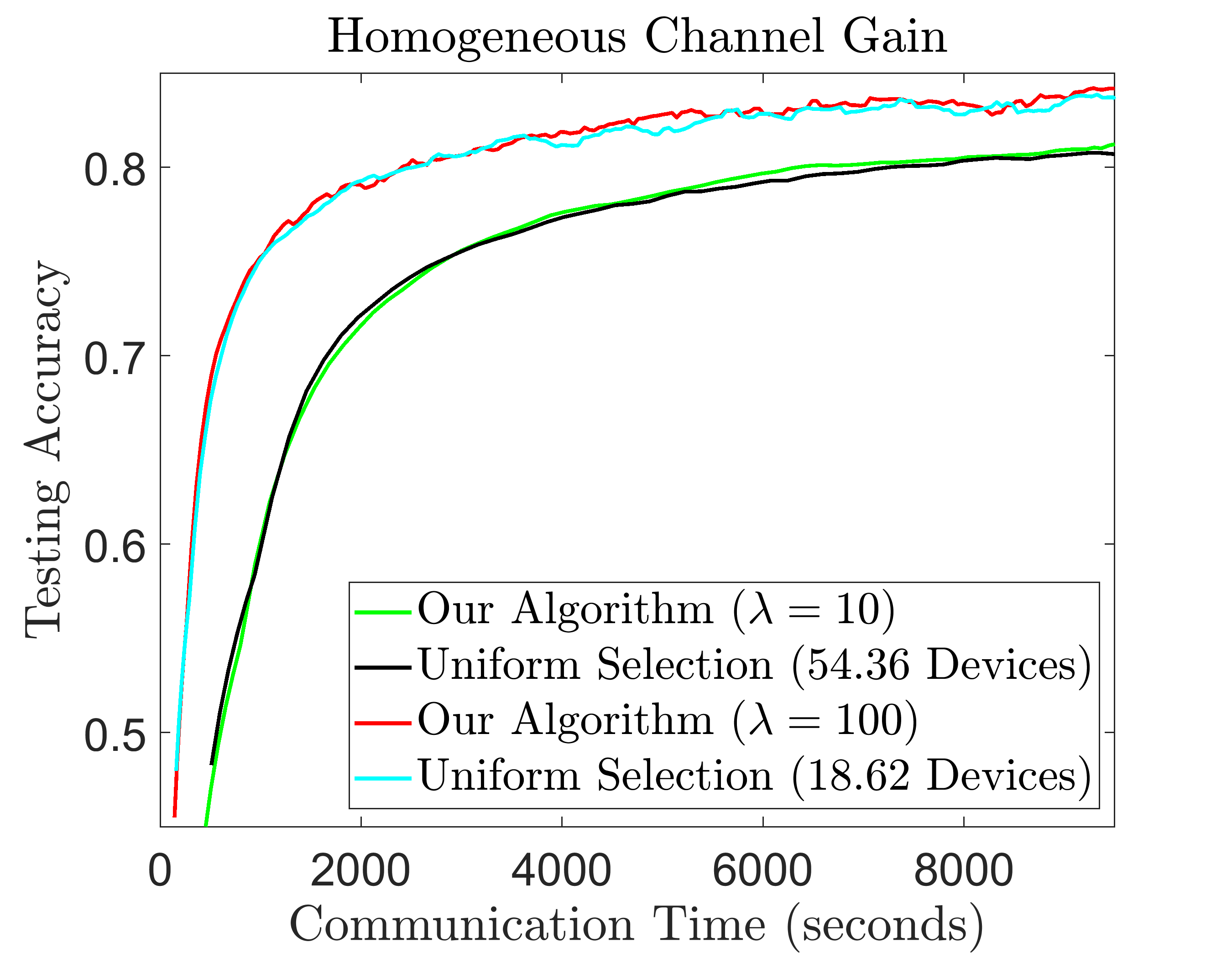}
        \caption{Testing accuracy over time.}
        \label{fig:FEMNIST_IID_test}
    \end{subfigure}
    \hfill
    \begin{subfigure}[b]{0.475\linewidth}
        \centering
        \includegraphics[width=1.1\linewidth]{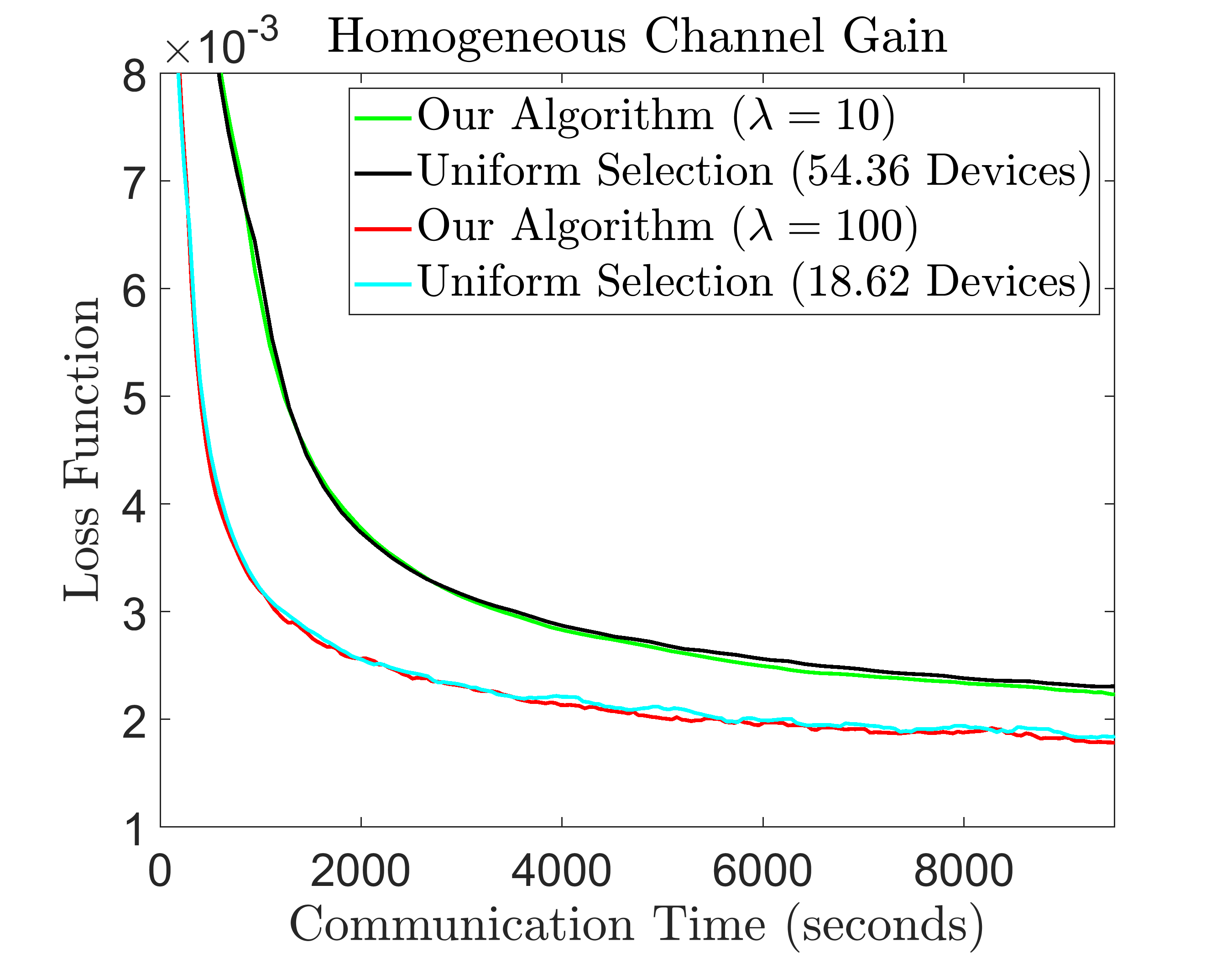}
        \caption{Loss function over time.}
        \label{fig:FEMNIST_IID_loss}
    \end{subfigure}
    \vskip\baselineskip
    \begin{subfigure}[b]{0.475\linewidth}
        \centering
        \includegraphics[width=1.1\linewidth]{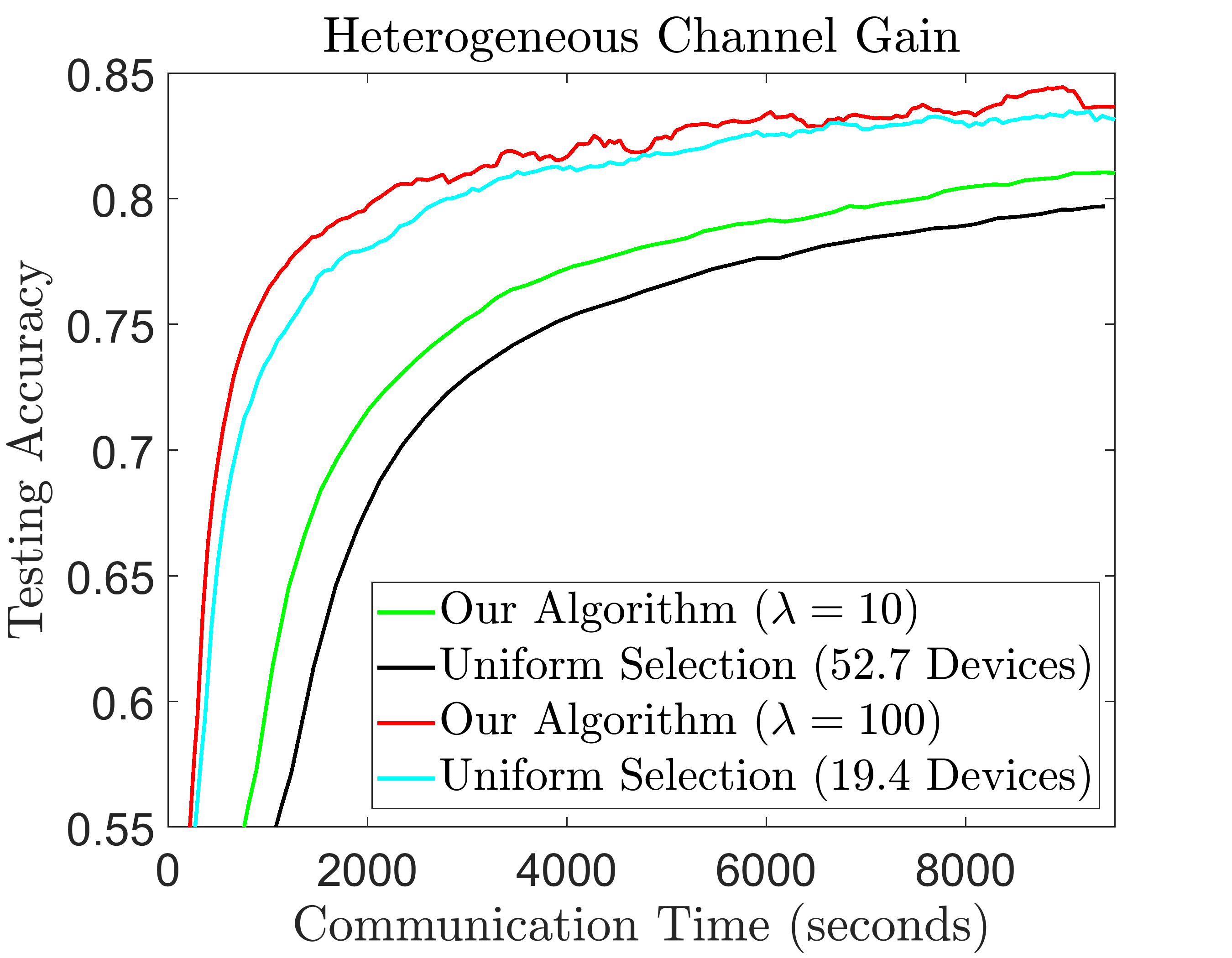}
        \caption{Testing accuracy over time.}
        \label{fig:FEMNIST_nonIID_test}
    \end{subfigure}
    \hfill
    \begin{subfigure}[b]{0.475\linewidth}
        \centering
        \includegraphics[width=1.1\linewidth]{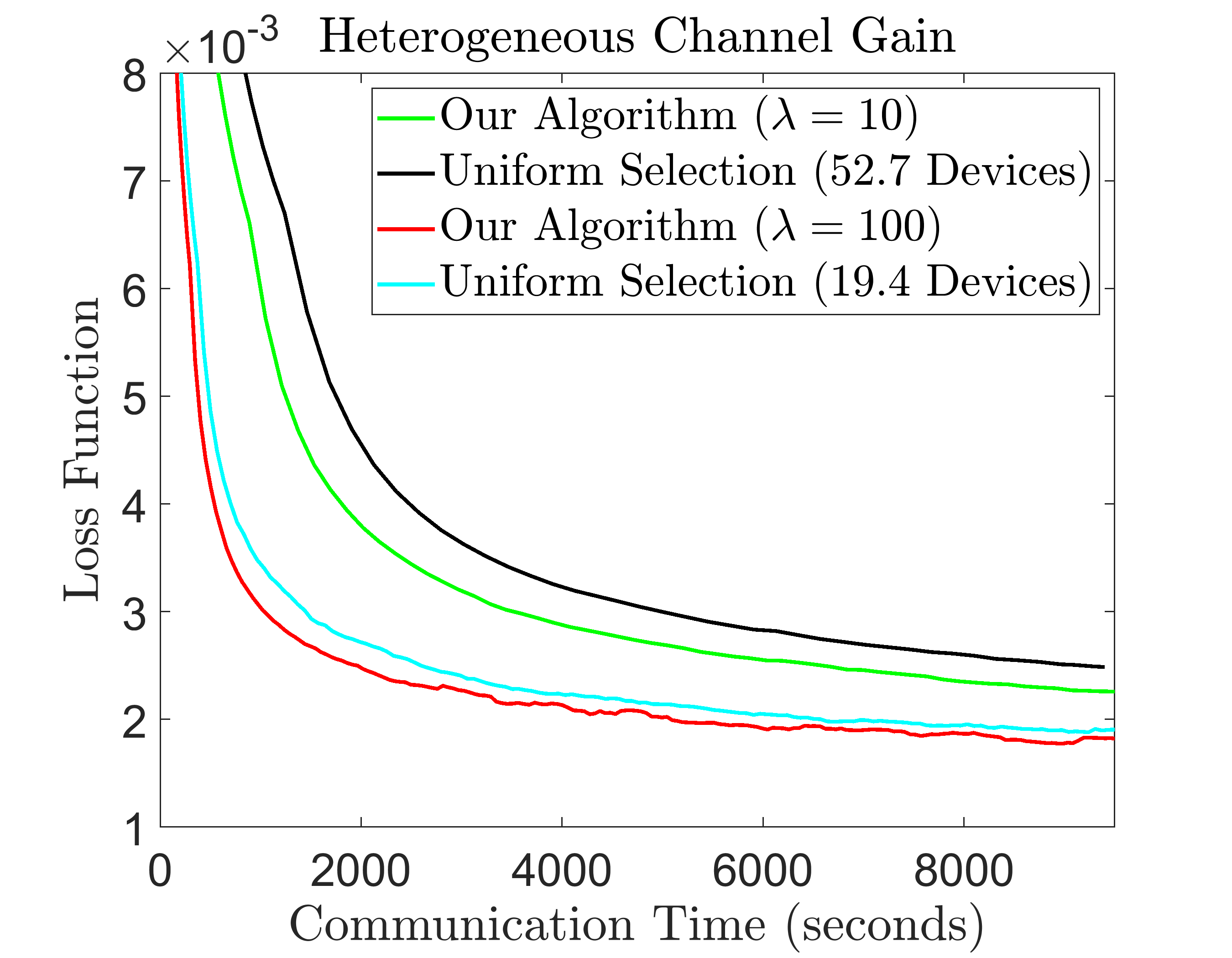}
        \caption{Loss function over time.}
        \label{fig:FEMNIST_nonIID_loss}
    \end{subfigure}
    \caption{Comparison of total communication time for uniform selection vs proposed algorithm on FEMNIST dataset.}
    \label{fig:FEMNIST}
\end{figure}

\subsection{The Effect of $V$}\label{sec:effectV}
In Figure \ref{fig:effectOfV}, we plot expected time average transmit power $\frac{1}{T} \sum_{t=0}^{T-1} {P_n(t) q_n^t}$ over the course of training rounds to show how the parameter $V$ in our algorithm affects the satisfaction of the power constraint.
While large $V$ brings us closer to the optimal values that minimize the weighted sum of the time average from the convergence upper bound and the total communication time, it also takes more rounds for the time average power constraint to be satisfied.
For $V=1$, the constraint is satisfied very quickly and oscillates around $\bar{P}=1$, while for $V=10^{5}$ case, it takes many more rounds to satisfy the constraint.
We also note for comparison purposes that the power allocated in the uniform selection case will always satisfy the constraint by design.
Thus, our algorithm sacrifices not satisfying the constraint initially in finite time in order to make gains in performance, but always satisfies the constraint asymptotically.
Our gains are not solely attributed to this, however.
For the previous experiments, we chose $V=1000$ since it satisfies the constraint in about the same amount of rounds as it takes for the loss function to achieve a desired value.

\begin{figure}
    \centering
    \includegraphics[width=.5\linewidth]{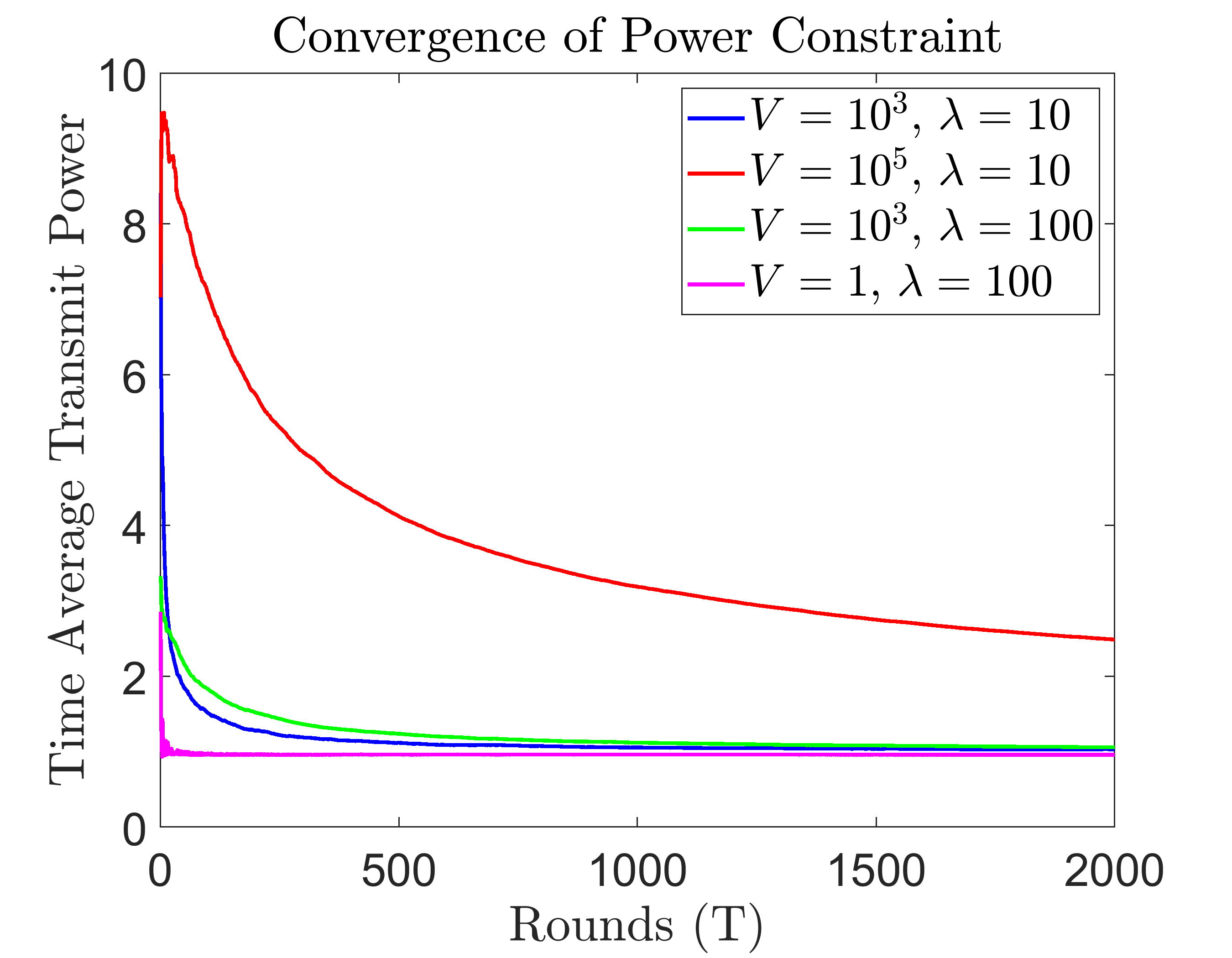}
    \caption{The convergence of the constraint for different values of $V$. The larger the $V$, the more rounds it takes until the constraint is satisfied. Here, the constraint is $\bar{P}_n = 1 $ for all $n$.}
    \label{fig:effectOfV}
    \vspace{-0.1in}
\end{figure}

\section{Conclusions and Future Works}
In this paper, we studied the affect of arbitrary selection probabilities for devices in federated learning and noted the challenge of scheduling devices in a heterogeneous wireless environment.
After deriving a novel convergence bound for non-convex loss functions, we formulated a stochastic optimization problem that minimizes a weighted sum of the derived convergence bound and the total time spent on transmitting the parameter updates under a transmit power constraint.
By using the Lyapunov drift-plus-penalty framework, we developed an algorithm that analytically solves the formulated problem to find the optimal selection probabilities and transmit powers.
Our experimental results showed that even without knowledge of the channel statistics, a significant amount of time can be saved during the FL training procedure using our algorithm.
We used a realistic non-\iid~dataset known as FEMNIST to demonstrate how the algorithm might perform in practice and the results were very promising for heterogeneous wireless environments.
We also showed via the CIFAR-10 dataset that the gains can be even greater when the data is \iid.
Future work may consider multiple access communication schemes and seek to minimize the slowest of the chosen devices since aggregation will ultimately be waiting for the last update.
There is potential for many interesting directions by considering different objective functions 
to focus on different aspects of the FL process.


\appendix

\subsection{Proof of Theorem \ref{thm:1}}\label{sec:appendix}
\textbf{Preliminary inequalities.}
By Jensen's inequality:
\begin{align}
\begin{array}{cc}
     \normsq{ \frac{1}{M}\sum_{m=1}^M \y_m } \leq \frac{1}{M}\sum_{m=1}^M \normsq{\y_m }
\end{array}
     \\
\begin{array}{cc}
    \normsq{ \sum_{m=1}^M \y_m } &\leq M \sum_{m=1}^M \normsq{ \y_m }
\end{array}
\end{align}
By Peter-Paul inequality:
\begin{equation}
    \langle \y_1, \y_2 \rangle \leq \frac{\rho \normsq{ \y_1 }}{2} + \frac{\normsq{ \y_2 }}{2\rho}
\end{equation}
for $\rho > 0$.

First, we note that
\begin{align*}
    \x_{t+1}-\x_t 
    & = \frac{1}{N}\sum_{n=1}^N \frac{\Identity_n^t}{q_n^t} (\y^n_{t,I} - \y^n_{t,0})\\
    &= - \frac{\gamma}{N}\sum_{n=1}^N \frac{\Identity_n^t}{q_n^t} \sum_{i=0}^{I-1}  \g_n(\y^n_{t,i})
\end{align*}
Then, from $L$-smoothness, we have
\begin{align}
    &\Expect[f(\x_{t+1})|\x_t] \nonumber\\
    &\leq f(\x_t) +\innerprod{\nabla f(\x_t), \Expect[\x_{t+1} - \x_{t}|\x_t]} \nonumber \\
    &\qquad + \frac{L}{2} \Expectcond{\normsq{\x_{t+1} - \x_{t}}}{\x_t}\nonumber\\
    &= f(\x_t) - \gamma\innerprod{\nabla f(\x_t), \Expectcond{\frac{1}{N}\sum_{n=1}^N \frac{\Identity_n^t}{q_n^t} \sum_{i=0}^{I-1}  \g_n(\y^n_{t,i})}{\x_t}}\nonumber\\
    & \qquad + \frac{\gamma^2 L}{2N^2} \Expectcond{\normsq{\sum_{n=1}^N \frac{\Identity_n^t}{q_n^t} \sum_{i=0}^{I-1}  \g_n(\y^n_{t,i})}}{\x_t}\nonumber\\
    &\overset{(a)}{=} f(\x_t) - \gamma\innerprod{\nabla f(\x_t), \frac{1}{N}\sum_{n=1}^N  \sum_{i=0}^{I-1}  \Expectcond{\nabla f_n(\y^n_{t,i})}{\x_t}}\nonumber \\
    & \qquad + \frac{\gamma^2 L}{2N^2} \Expectcond{\normsq{\sum_{n=1}^N \frac{\Identity_n^t}{q_n^t} \sum_{i=0}^{I-1}  \g_n(\y^n_{t,i})}}{\x_t}\nonumber\\
    &= f(\x_t) - \gamma  \sum_{i=0}^{I-1}  \Expectcond{\innerprod{\nabla f(\x_t),\frac{1}{N}\sum_{n=1}^N \nabla f_n(\y^n_{t,i})}}{\x_t} \nonumber\\
    &\qquad + \frac{\gamma^2 L}{2N^2} \Expectcond{\normsq{\sum_{n=1}^N \frac{\Identity_n^t}{q_n^t} \sum_{i=0}^{I-1}  \g_n(\y^n_{t,i})}}{\x_t}
    \label{eq:convergence_proof_1}
\end{align}
where (a) uses the independence between $\Identity_n^t$ and $\g_n$, the fact that $\Expectcond{\Identity_n^t}{\x_t} = \Expectbracket{\Identity_n^t} = q_n^t$, and the total expectation $\Expectcond{\g_n(\y^n_{t,i})}{\x_t} = \Expectcond{\Expectcond{\g_n(\y^n_{t,i})}{\y^n_{t,i}, \x_t}}{\x_t} =  \Expectcond{\nabla f_n(\y^n_{t,i})}{\x_t}$.

For the last term, we note that
\begin{align*}
    &\Expectcond{\normsq{\sum_{n=1}^N \frac{\Identity_n^t}{q_n^t} \sum_{i=0}^{I-1}  \g_n(\y^n_{t,i})}}{\x_t} \\
    &\qquad \qquad \leq   N \sum_{n=1}^N  \Expectcond{\normsq{ \frac{\Identity_n^t}{q_n^t} \sum_{i=0}^{I-1}  \g_n(\y^n_{t,i})}}{\x_t} \\
    &\qquad \qquad= N \sum_{n=1}^N \frac{\Expectcond{\Identity_n^t}{\x_t}}{(q_n^t)^2} \Expectcond{\normsq{  \sum_{i=0}^{I-1}  \g_n(\y^n_{t,i})}}{\x_t} \\
    &\qquad \qquad\leq NI \sum_{n=1}^N \frac{q_n^t}{(q_n^t)^2} \sum_{i=0}^{I-1}  \Expectcond{\normsq{  \g_n(\y^n_{t,i})}}{\x_t} \\
    &\qquad \qquad\leq NI \sum_{n=1}^N \frac{1}{q_n^t} \sum_{i=0}^{I-1}  \Expectcond{\normsq{  \g_n(\y^n_{t,i})}}{\x_t}
\end{align*}

Plugging back into (\ref{eq:convergence_proof_1}), we get
\begin{align}
    &\Expect[f(\x_{t+1})|\x_t] \nonumber\\
    &\leq f(\x_t) - \gamma  \sum_{i=0}^{I-1}  \Expectcond{\innerprod{\nabla f(\x_t),\frac{1}{N}\sum_{n=1}^N \nabla f_n(\y^n_{t,i})}}{\x_t} \nonumber\\
    &\qquad + \frac{LI \gamma^2}{2N}  \sum_{n=1}^N \frac{1}{q_n^t}  \sum_{i=0}^{I-1}  \Expectcond{\normsq{  \g_n(\y^n_{t,i})}}{\x_t}
\end{align}
Taking total expectation on both sides, we have
\begin{align}
    \Expectbracket{f(\x_{t+1})} &\leq \Expectbracket{f(\x_t)}\nonumber\\
    &\quad - \gamma   \sum_{i=0}^{I-1}  \Expectbracket{\innerprod{\nabla f(\x_t),\frac{1}{N}\sum_{n=1}^N\nabla f_n(\y^n_{t,i})}}\nonumber\\
    &\quad + \frac{LI \gamma^2}{2N}  \sum_{n=1}^N \frac{1}{q_n^t} \sum_{i=0}^{I-1}  \Expectbracket{\normsq{  \g_n(\y^n_{t,i})}} \label{eq:convergence_proof_2}
\end{align}

Now, note that
\begin{align*}
    &-\gamma\Expectbracket{\innerprod{\nabla f(\x_t),\frac{1}{N}\sum_{n=1}^N\nabla f_n(\y^n_{t,i})}} \\
    &= -\gamma\Expectbracket{\innerprod{\nabla f(\x_t),\frac{1}{N}\sum_{n=1}^N\nabla f_n(\y^n_{t,i}) - \nabla f(\x_t) + \nabla f(\x_t)}} \\
    &= \gamma\Expectbracket{\innerprod{\nabla f(\x_t), \nabla f(\x_t) - \frac{1}{N}\sum_{n=1}^N\nabla f_n(\y^n_{t,i}) }} \nonumber\\
    &\qquad -\gamma \Expectbracket{\innerprod{\nabla f(\x_t),\nabla f(\x_t)}} \\
    &\leq \frac{\gamma}{2}\Expectbracket{\normsq{\nabla f(\x_t)}} + \frac{\gamma}{2}\Expectbracket{\normsq{\nabla f(\x_t) - \frac{1}{N}\sum_{n=1}^N\nabla f_n(\y^n_{t,i})}}\nonumber\\
    &\qquad- \gamma\Expectbracket{\normsq{\nabla f(\x_t)}} \\
    &= \frac{\gamma}{2}\Expectbracket{\normsq{\nabla f(\x_t)}} \nonumber\\
    &\qquad + \frac{\gamma}{2}\Expectbracket{\normsq{ \frac{1}{N}\sum_{n=1}^N \left[\nabla f_n(\x_t) - \nabla f_n(\y^n_{t,i})\right]}}\nonumber\\
    &\qquad - \gamma\Expectbracket{\normsq{\nabla f(\x_t)}} \\
    &\leq  \frac{\gamma}{2N}\sum_{n=1}^N\Expectbracket{\normsq{\nabla f_n(\x_t) - \nabla f_n(\y^n_{t,i})}} - \frac{\gamma}{2}\Expectbracket{\normsq{\nabla f(\x_t)}}\\
    &\leq  \frac{\gamma L^2}{2N}\sum_{n=1}^N\Expectbracket{\normsq{\x_t - \y^n_{t,i}}} - \frac{\gamma}{2}\Expectbracket{\normsq{\nabla f(\x_t)}}\\
    &\leq \frac{\gamma L^2}{2N}\sum_{n=1}^N\Expectbracket{\normsq{\sum_{j=0}^{i-1} \gamma\g_n(\y^n_{t,j})}} - \frac{\gamma}{2}\Expectbracket{\normsq{\nabla f(\x_t)}}\\
    &\leq \frac{\gamma^3 L^2(I-1)}{2N}\sum_{n=1}^N\sum_{j=0}^{i-1}\Expectbracket{\normsq{ \g_n(\y^n_{t,j})}} - \frac{\gamma}{2}\Expectbracket{\normsq{\nabla f(\x_t)}}
\end{align*}

Plugging back to (\ref{eq:convergence_proof_2}), we have
\begin{align}
    &\Expectbracket{f(\x_{t+1})} \nonumber\\
    &\leq \Expectbracket{f(\x_t)}  + \frac{\gamma^3 L^2(I-1)}{2N}\sum_{n=1}^N\sum_{i=0}^{I-1}\sum_{j=0}^{i-1}\Expectbracket{\normsq{ \g_n(\y^n_{t,j})}} \nonumber\\
    &\qquad - \frac{\gamma I}{2}\Expectbracket{\normsq{\nabla f(\x_t)}} \nonumber\\
    &\qquad + \frac{LI \gamma^2}{2N}  \sum_{n=1}^N \frac{1}{q_n^t} \sum_{i=0}^{I-1}  \Expectbracket{\normsq{  \g_n(\y^n_{t,i})}} \label{eq:convergence_proof_3}
\end{align}

Rearranging and summing $t$ from $0$ to $T-1$, we have
\begin{align}
    &\frac{1}{T}\sum_{t=0}^{T-1}\Expectbracket{\normsq{\nabla f(\x_t)}} \nonumber\\
    &\quad\leq \frac{2\left(\Expectbracket{f(\x_0)} - \Expectbracket{f(\x_{T})}\right)}{\gamma TI}\nonumber\\
    &\qquad+ \frac{\gamma^2 L^2(I-1)}{TIN}\sum_{t=0}^{T-1}\sum_{n=1}^N\sum_{i=0}^{I-1}\sum_{j=0}^{i-1}\Expectbracket{\normsq{ \g_n(\y^n_{t,j})}} \nonumber\\
    &\qquad + \frac{\gamma L}{TN}  \sum_{t=0}^{T-1}\sum_{n=1}^N \frac{1}{q_n^t} \sum_{i=0}^{I-1}  \Expectbracket{\normsq{  \g_n(\y^n_{t,i})}} \nonumber\\
    &\quad\leq \frac{2\left(f(\x_0) - f^*\right)}{\gamma TI}\nonumber\\
    &\qquad + \frac{\gamma^2 L^2(I-1)}{TIN}\sum_{t=0}^{T-1}\sum_{n=1}^N\sum_{i=0}^{I-1}\sum_{j=0}^{i-1}\Expectbracket{\normsq{ \g_n(\y^n_{t,j})}} \nonumber\\
    &\qquad + \frac{\gamma L}{TN}  \sum_{t=0}^{T-1}\sum_{n=1}^N \frac{1}{q_n^t} \sum_{i=0}^{I-1}  \Expectbracket{\normsq{  \g_n(\y^n_{t,i})}}
\end{align}

\subsection{Proof of Theorem \ref{thm:Lyapunov}}\label{sec:appendix2}
Since there are only two variables to solve for and two simple boundary constraints per $n$, we can find the minimizing values of $q_n^t$ and $P_n(t)$ by finding the roots of the gradient of the objective function and ensuring that they are within the upper and lower bounds.
If no roots are within that set, one of the end points will minimize the function, so we only need to check those points.

To find the roots, we compute the gradient of the objective function for each $n$ in \eqref{eqn:minimization}
\begin{align}\label{eqn:gradient}
    &\nabla f(q_n^t,P_n(t)) = \nonumber \\
    &\begin{bmatrix}
    -\frac{V}{N (q_n^t)^2} + \frac{ V\lambda\ell}{B\log_2\left(1+|h_n(t)|^2 \frac{P_n(t)}{N_0}\right) }+Z_n(t)P_n(t)\\
     \frac{-V\lambda\ell|h_n(t)|^2}{N_0 B\left(1+|h_n(t)|^2 \frac{P_n(t)}{N_0}\right)\left(\log_2\left(1+|h_n(t)|^2 \frac{P_n(t)}{N_0}\right)\right)^2} q_n^t +Z_n(t) q_n^t
    \end{bmatrix}.
\end{align}
We first look at the partial derivative with respect to $P_n(t)$ and note that setting it equal to zero and dividing by $q_n^t$ gives
\begin{align*}
    0=&\nonumber\\
    &\frac{-V\lambda\ell |h_n(t)|^2/(N_0 B)}{\left(1+|h_n(t)|^2 \frac{P_n(t)}{N_0}\right)\left(\log_2\left(1+|h_n(t)|^2 \frac{P_n(t)}{N_0}\right)\right)^2} +Z_n(t) 
\end{align*}
which does not depend on $q_n^t$.
Next, let $A=\frac{V\lambda\ell |h_n(t)|^2 \left(\log(2)\right)^2}{N_0 B Z_n(t)}$ and $x=1+|h_n(t)|^2 \frac{P_n(t)}{N_0}$, then we have something in the form of
\begin{align*}
    A &= x \left(\log(x)\right)^2 \nonumber  = x \left(\log\left(\sfrac{1}{x}\right)\right)^2 .
\end{align*}
By dividing both sides by $1/4$, letting $x' = \sqrt{\frac{A}{4}} \frac{1}{\sqrt{x}}$, and rearranging, we have
\begin{align*}
    \sqrt{\sfrac{A}{4}}=x' e^{x'}
\end{align*}
that has a known solution of $ x' =W_k \left(\sqrt{\frac{A}{4}}\right)$ where $W_k(\cdot)$ is the Lambert $W$ function which solves $w\exp{w}=z$ for $w$.

To get the critical point for $P_n(t)$, we unwrap and substitute $P_n(t) = \frac{N_0}{|h_n(t)|^2}(x-1)$, to get
\begin{align} \label{eqn:powerOpt2}
    P_n^\textnormal{opt}(t) = \frac{N_0}{|h_n(t)|^2} \left(\frac{A}{4} W_k\left(\sqrt{\frac{A}{4}}\right)^{-2}-1\right)
\end{align}
which has a single root at $k=0$ since $\sqrt{\frac{A}{4}}\geq 0$.

Finally, for the critical point for $q_n^t$, we can plug $P_n^\textnormal{opt}(t)$ into the partial derivative with respect to $q_n^t$ to get \eqref{eqn:qOpt}.

\clearpage

\bibliographystyle{./bibliography/IEEEtran}
\bibliography{references.bib}

\end{document}